\documentclass[11pt,final]{article}
\usepackage{fullpage}
\usepackage[numbers,compress]{natbib}

\usepackage{microtype}
\usepackage{graphicx}
\usepackage{subcaption}
\usepackage{amssymb}
\usepackage{booktabs} 
\usepackage{bm}
\usepackage{hyperref}
\usepackage{float}
\usepackage{placeins}
\usepackage{stfloats}
\usepackage{pifont}
\usepackage{hyperref}
\usepackage{booktabs}  % For \toprule, \midrule, \bottomrule, \cmidrule
\usepackage{siunitx}
\usepackage{amsmath}
\usepackage{float}
\DeclareMathOperator*{\argmax}{arg\,max}

\usepackage{hyperref}       % hyperlinks
\hypersetup{
     colorlinks   = true,
	 citecolor = teal,
}

\usepackage{amsmath}
\usepackage{amssymb}
\usepackage{mathtools}
\usepackage{amsthm}
\usepackage{mathtools}
\usepackage{multirow}
\usepackage{longtable}
\usepackage{booktabs} % Ensure you have this for \toprule, \midrule, etc.
\DeclarePairedDelimiter{\abs}{\lvert}{\rvert}
\usepackage[capitalize,noabbrev]{cleveref}

\theoremstyle{plain}
\newtheorem{theorem}{Theorem}[section]
\newtheorem{proposition}[theorem]{Proposition}

\theoremstyle{definition}

\theoremstyle{remark}
\newtheorem{remark}[theorem]{Remark}

\usepackage{enumitem}

\usepackage{tikz}
\usepackage{fontawesome5}
\usetikzlibrary{fit} 
\usetikzlibrary{positioning, shapes, arrows.meta, calc}

\newcount\Includeappendix
\newcommand{\apxref}[1]{%
  \ifnum\Includeappendix=0
    the appendix%
  \else
    {#1}%
  \fi
}

\usepackage[textsize=tiny]{todonotes}

\title{From Competition to Collaboration: Designing Sustainable Mechanisms Between LLMs and Online Forums}

\author{%
Niv Fono%
\thanks{%
    {Technion--Israel Institute of Technology
    (\url{niv.fono@campus.technion.ac.il})}}%
\and
Yftah Ziser%
\thanks{%
    {University of Groningen, NVIDIA Research
    (\url{y.ziser@rug.nl})}}%
\and
Omer Ben{-}Porat%
\thanks{%
    {Technion--Israel Institute of Technology
    (\url{omerbp@technion.ac.il})}}%
}
\date{}

\begin{document}
\maketitle
\begin{abstract}
While Generative AI (GenAI) systems draw users away from (Q\&A) forums, they also depend on the very data those forums produce to improve their performance. Addressing this paradox, we propose a framework of sequential interaction, in which a GenAI system proposes questions to a forum that can publish some of them. Our framework captures several intricacies of such a collaboration, including non-monetary exchanges, asymmetric information, and incentive misalignment. We bring the framework to life through comprehensive, data-driven simulations using real Stack Exchange data and commonly used LLMs. We demonstrate the incentive misalignment empirically, yet show that players can achieve roughly half of the utility in an ideal full-information scenario. Our results highlight the potential for sustainable collaboration that preserves effective knowledge sharing between AI systems and human knowledge platforms.
\end{abstract}
\usetikzlibrary{positioning, shapes, arrows.meta, calc}
\begin{figure*}[t]
    \centering
    \scalebox{0.6}{
    \begin{tikzpicture}[
        font=\sffamily,
        >=Stealth,
        align=center,
        every node/.style={font=\Large},
        box/.style={
            draw, thick, fill=#1!10,
            rounded corners=8pt,
            minimum height=2.5cm,
            inner sep=8pt
        },
        arrow/.style={->, thick},
        dashedarrow/.style={->, thick, dashed}
    ]

    % --- Nodes ---
    \node[box=gray] (batch) {\faList~A Batch of\\Questions $Q_t$};

    \node[box=blue, right=1.5cm of batch, align=left] (genai) {
        \textbf{GenAI}\\
        \textit{ranks questions}\\[3pt]
        Question 1 $\bigstar\bigstar$\\[-2pt]
        Question 2 $\bigstar$\\[-2pt]
        $\vdots$\\[-2pt]
        Question $n$ $\bigstar\bigstar\bigstar\bigstar$
    };

    \node[box=green, right=3.7cm of genai, align=left] (forum) {
        \textbf{Forum}\\
        \textit{Applies a Selection Rule $\mathcal{R}$}\\[3pt]
        Question 1 {\checkmark}\\[-2pt]
        Question 2 {\texttimes}\\[-2pt]
        $\vdots$\\[-2pt]
        Question $M$ {\checkmark}
    };

    \node[box=cyan, right=4.1cm of forum, minimum width=2.7cm] (internet) {
        \textbf{WWW}\\
        \faGlobe
    };

    % --- Main arrows ---
    % Shorter first arrow (almost straight)
    \draw[arrow] (batch.east) -- ++(1.0cm,0) -- (genai.west);

    % Centered and longer middle arrows (less curvy)
    \draw[arrow] (genai.east) -- ++(3.2cm,0) node[above, pos=0.5, xshift=0.3cm, align=center]{send top $M$\\questions \faEnvelope} -- (forum.west);

    \draw[arrow] (forum.east) -- ++(3.7cm,0) node[above, pos=0.5, xshift=0.3cm, align=center]{publish selected\\questions \faEnvelope} -- (internet.west);

    % --- Iteration loop arrows (less curved) ---
    \draw[arrow, bend right=10] (internet.north) to node[above, align=center]{Forum Utility \faBolt} (forum.north);
    \draw[arrow, bend left=12] (internet.south) to node[below, align=center]{GenAI Utility \faBolt} (genai.south east);

    % --- Title (larger and higher, centered over full diagram) ---
    \node[
        font=\bfseries\Huge,
        text=blue!60!black
    ] (title) at ($(batch.north)!0.5!(internet.north)+(0,2.0)$) {Iteratively Do \faRedo};

    \end{tikzpicture}
    }
    \caption{Iterative interaction between a GenAI provider and an online Q\&A forum. In each round, the GenAI ranks user-generated questions by their expected model-learning value and submits the top $M$ to the forum. The forum then applies its own selection rule $\mathcal{R}$, publishing only those questions that align with its community objectives. Feedback from the published posts informs both sides in subsequent rounds.}
    \label{fig:teaser}
\end{figure*}

\section{Introduction}
Online Q\&A forums are witnessing a sharp decline in user participation, as people post fewer questions and answers than before. Stack Overflow, for example, experienced a 25\% drop in posts within months of ChatGPT's release-a striking indicator of this trend \cite{10.1093/pnasnexus/pgae400}. Instead of contributing to community-driven platforms like Stack Overflow or Mathematics Stack Exchange, users increasingly turn to generative AI systems for instant answers \cite{impacts}. While this shift offers individual convenience, it carries systemic risks: Q\&A forums generate much of the high-quality, human-produced knowledge that large language models (LLMs) depend on for training 
\cite{almazrouei2023falconseriesopenlanguage,hou2024largelanguagemodelssoftware}, evaluation \cite{dai2024deepseekmoeultimateexpertspecialization}, and benchmarking \cite{lu2024deepseekvlrealworldvisionlanguageunderstanding}. This dependence is especially consequential because manual annotation remains a prohibitively expensive bottleneck in machine learning, making organically generated, high-quality human data sources particularly valuable \cite{10068925,6889457,golazizian2024costefficientsubjectivetaskannotation}. A sustained decline in forum activity may therefore weaken the very epistemic foundations of GenAI, creating a negative feedback loop where AI systems erode the human knowledge sources they rely on. Ultimately, this erosion could not only risk AI progress but also threaten the survival of these online communities, depriving users of open, collaborative spaces for trustworthy knowledge exchange.

Existing responses to this sustainability challenge
suggest restricting data access by GenAI companies~\cite{AKAICHI2025100698} or offer a compensation~\cite{FELGENHAUER2024112017}. Such an 
approach frames GenAI firms and online knowledge communities as adversarial stakeholders in competition over data ownership or user attention, rather than as co-dependent entities within a shared knowledge base.
Recent works have begun to recognize the structural nature of this interdependence. For example,~\citet{taitler2024braesssparadoxgenerativeai} demonstrate that uncoordinated adoption of GenAI technologies can lead to a Braess-like paradox in which both AI development and community vitality suffer in the long run due to declining user participation.
However, even such analyses typically stop short of modeling online forums as fully strategic agents with their own utility functions, governance norms, and content curation policies.

To address this challenge, we propose a form of collaboration that protects the interests of both parties without involving monetary exchange. Specifically, we consider an exchange of goods: GenAI systems could submit queries that they fail to resolve to Q\&A forums, where human experts can provide high-quality answers. This exchange benefits both sides: GenAI companies gain access to expert-level supervision data to improve their models, and forums regain traffic and engagement through challenging, high-value questions.
Recent research analyzing Stack Overflow data in the post-ChatGPT period \cite{helic2025stack} shows that while overall user contributions have declined, the complexity and difficulty of questions have increased, indicating that users are already turning to forums for queries that LLMs cannot handle effectively. Because GenAI firms are unlikely to disclose their internal strategies and forums may reveal only partial information about their curation or moderation policies, such collaboration must be asymmetric by design.
We therefore introduce a novel framework that enables both sides to preserve confidentiality while still engaging in a mutually beneficial interaction.

Figure~\ref{fig:teaser} illustrates our model. We formalize the interaction as a strategic iterative collaboration in which the GenAI company proposes candidate questions for publication and forums decide whether to accept or reject them.
Each actor optimizes its own objective: GenAI companies seek questions that maximize their model improvement potential, while forums prioritize those that enhance community utility.
Importantly, our framework is \textbf{agnostic} to the specific definitions of these utilities, allowing for flexible and evolving notions of value.

We evaluate our framework through large-scale, data-driven simulations grounded in real Stack Exchange data and multiple open-source LLMs. Our simulations reveal that the epistemic value of forum data for GenAI systems is systematically misaligned with community engagement. Questions that are most beneficial for model learning often differ from those that attract high participation or satisfaction within the forums. Nevertheless, we show that each actor can recover a substantial fraction of its utility relative to an idealized full-information collaboration.

\subsection{Our Contribution}
Our work makes several key contributions toward understanding and modeling the interaction between generative AI systems and Q\&A forums:  
\begin{itemize}[wide]

\item \textbf{Realistic collaboration design.} We survey the key dimensions that shape GenAI-forum collaboration and distill three guiding principles for realistic interaction: (i) prohibiting monetary transfers to preserve community trust and autonomy, (ii) addressing intrinsic incentive misalignment between model improvement potential and community engagement value, and (iii) formalizing the collaboration as a process in which GenAI proposes a set of questions and the forum determines their publication. These principles, discussed in Section~\ref{justifications}, capture the social, economic, and strategic constraints that motivate our later framework.

\item \textbf{Game-theoretic framework.} 
We develop a game-theoretic framework that models GenAI providers and Q\&A forums as strategic, interdependent agents---each optimizing distinct yet interconnected utilities. This formulation reinterprets their interaction as a cooperative rather than adversarial game, encompassing both full-information and asymmetric-information settings and providing a foundation for analyzing stable, mutually beneficial equilibria.

\item \textbf{Data-driven game simulation.} 
We empirically instantiate the proposed framework through large-scale simulations of a realistic GenAI-Forum game, using real data from multiple Stack Exchange communities and open-source LLMs. Our analyses reveal a systematic misalignment between GenAI's improvement potential (perplexity) and forum engagement value (view counts), confirming that their interaction constitutes a genuine strategic game rather than an optimization problem. Extensive data-driven experiments and robust bootstrap analysis show that our learned acceptance-aware mechanism recovers over half of the ideal full-information utility for both agents. Specifically, it achieves 51\%--63\% of GenAI's learning potential and 58\%-70\% of forum engagement---without disclosing private information. These findings demonstrate that adaptive, game-theoretic interaction can produce substantial mutual gains even under asymmetric information.

\end{itemize}

\section{General Collaboration Guidelines} \label{justifications}
Before introducing our formal framework, we first outline several key principles that guide its design.
These properties capture essential aspects of GenAI-Forum collaboration and provide the conceptual foundation on which the model is built. Each of the following subsections focuses on one such consideration and explains how it is reflected in the formal setting.

\subsection{No Monetary Transfers}

Markets are often viewed as efficient mechanisms for resolving conflicts of interest through monetary exchange.
A natural question therefore arises: why not allow GenAI companies to pay online forums to publish their questions?
While this arrangement may seem straightforward in theory, it overlooks deeper structural and strategic challenges inherent to such collaborations.
In practice, introducing payments would distort community incentives and compromise forum autonomy.

We highlight two fundamental reasons:
\begin{enumerate}[wide]
    \item \textbf{Erosion of community trust.}
    Forum users contribute voluntarily, motivated by intrinsic factors such as reputation, reciprocity, and community belonging rather than financial gain\cite{meta-analytic}.
    If forums begin accepting payments from AI companies for publishing or answering GenAI-generated questions, users may perceive that their unpaid contributions are being monetized by others.
    This perception could undermine trust and fairness, prompting users to disengage.    Since community participation relies on gamified reputation systems rather than financial rewards, introducing money risks replacing intrinsic motivation with extrinsic incentives, ultimately reducing engagement~\cite{AI2022103101,doi:10.1177/0007650312472609}.
    
    \item \textbf{Loss of autonomy and value misalignment.}
    Financial dependence on GenAI companies would expose forums to legitimacy risks observed in nonprofit--corporate partnerships~\cite{doi:10.1177/0007650312472609,phillips2007tamed}.
    When a community-driven platform becomes financially tied to a powerful corporate actor, its decisions may gradually align with the sponsor's interests rather than its own mission.
\end{enumerate}

In the economic literature, settings with transferable utilities are typically easier to handle than non-transferable utilities (see, e.g., \cite{procaccia2013approximate}).
We do not argue that all GenAI-Forum collaboration should rely on non-transferable utilities, but rather that such settings warrant focused study in this context.

\subsection{Incentive Misalignment}
\label{subsec:incentive_mis}
At first glance, the objectives of GenAI and the Forum appear well-aligned.
Both parties benefit from the publication of well-posed, interesting questions that attract community engagement and yield high-quality answers.
Such questions contribute to the Forum's vitality and simultaneously provide GenAI with valuable external feedback that can inform model improvement.
However, a closer examination suggests that the two sides evaluate their value along different dimensions.
For the Forum, the primary objective is to sustain community participation and content quality---favoring questions that are clear, accessible, and likely to elicit thoughtful human responses.
For GenAI, by contrast, value arises from informational gain: questions that expose uncertainty or challenge the model's current knowledge state are most useful, even if they are niche, ambiguous, or attract limited human engagement.
As a result, a question highly valuable to GenAI may not resonate with the community, while the questions most engaging to users may provide little new information to the model.
This divergence creates a subtle yet significant misalignment in incentives (this theoretical discussion is substantiated empirically later in Section~\ref{sec:findings}).

\subsection{Asymmetric Information and Roles}\label{subsec:asy}
In practice, full transparency between GenAI and the Forum is neither feasible nor desirable.
GenAI companies are naturally cautious about revealing their internal processes or decision criteria.
The questions they propose often reflect areas of uncertainty or weakness in the model. Disclosing the entire set of such questions could expose proprietary information or highlight model failures~\cite{carlini2021extracting,kandpal-etal-2024-user,NEURIPS2024_e01519b4}. 
Therefore, GenAI must act strategically when selecting which questions to share.% \niv{There is a long economics/accounting theory tradition explicitly modeling voluntary disclosure where an informed sender may disclose a subset of signals, and receivers have uncertainty about the sender’s information endowment\cite{10.1145/2558397,selective_disclosure,10.1257/aer.104.8.2400}. In many industries, the correct abstraction of strategic communication is “disclose a subset of items” which might even an outperform or complement internal R\&D, especially when the needed expertise is distributed \cite{aquisition,marginality}.}
At the same time, the Forum maintains discretion over what is ultimately published.
Its decisions are guided by internal policies that aim to protect the quality and sustainability of the community. 
The Forum may also limit the volume of AI-submitted content to preserve the platform's authentic character.

This structure creates an inherently asymmetric form of collaboration: GenAI controls what to propose, and the Forum controls what to publish.
Such asymmetry reflects real-world conditions, where cooperation is possible and mutually beneficial, yet bounded by privacy, autonomy, and strategic caution.
We formalize this asymmetric interaction in the next section.

\section{Problem Formulation}\label{Problem Formulation}

We now introduce the formal asymmetric-information framework. We consider a sequential interaction over $T$ discrete rounds between a GenAI provider and a Q\&A forum, which we refer to as Player~G and Player~F, respectively. Each round includes two stages: in the first stage, Player~G proposes candidate questions for publication. In the second stage, Player~F curates which of these are ultimately published. Formally, in every round $t \in [T]$,
\paragraph{Stage I} Player~G is given a candidate pool of questions, which we denote by $Q_t \subset \mathcal Q$,  
where $\mathcal Q$ is the space of all possible questions. In practice, $Q_t$ models the set of candidate questions identified by Player~G as uncertain, error-prone, or worth external feedback. For instance, this pool may be derived from analyzing past user interactions and selecting those with low implicit satisfaction scores, ambiguous answers, or high model uncertainty---cases where the system would most benefit from human clarification. We abstract the way $Q_t$ is constructed, and assume it is given exogenously. 

As discussed in Subsection~\ref{subsec:asy},
Player~G does not wish to share all of its candidate questions with Player~F. Therefore, Player~G selects a subset $A_t \subseteq Q_t$, where we assume $\abs{A_t} \leq M \ll |Q_t|$ for some $M \in \mathbb N$, and submits $A_t$ to Player~F.

\paragraph{Stage II}
Player~F observes $A_t$, but does not publish $A_t$ as is. Instead, it uses a \emph{selection rule} $\mathcal{R}$, $\mathcal{R}:2^\mathcal Q \rightarrow 2^\mathcal Q$, and selects a subset of published questions $S_t$ such that $S_t = \mathcal{R}(A_t)$. The rule $\mathcal R$ represents Player~F's internal decision mechanism, reflecting its own objectives and community standards. Here too, we assume Player~F is constrained to pick $|S_t| \leq K$ for some $K \in \mathbb N$ for every round $t$.

\paragraph{Utilities}
Published questions draw the attention of the user community in Player~F's platform, who interact with it and possibly generate feedback. Such interaction and feedback benefit Player~F, who aims to optimize KPIs like user engagement measurements, such as views, upvotes, or answers. We keep the actual objective of Player~F abstract, and let \(u_F(q)\) measure the utility of Player~F from each question $q \in \mathcal Q$.  

Similarly, Player~G also derives utility from publishing questions on Player~F, as community interactions provide valuable external signals. For example, user-generated clarification comments, alternative answers, or follow-up discussions can serve as high-quality supervision signals that reveal model weaknesses or knowledge gaps. These signals can be incorporated into Player~G's training or evaluation pipelines to improve its accuracy, calibration, and domain coverage. Here too, we abstract away the richness of the feedback and quantify the utility of a (published) question $q\in \mathcal Q$ by $u_G(q)$.
We assume the utilities are deterministic and are privately known to the respective players. The assumption of deterministic utilities simplifies the analysis by removing uncertainty from the utility space, allowing us to focus on the strategic interaction itself rather than stochastic variability. 

We extend the utilities naturally to sets such that $U_G(S) = \sum_{q \in S} u_G(q)$ for every $S \subseteq \mathcal{Q}$, and similarly $U_F(S) = \sum_{q \in S} u_F(q)$ for Player~F.\footnote{We consider linear utilities for simplicity and practical justification; we elaborate more in \apxref{Section \ref{sec:limitations}}.}
Since the question sets $(S_t)_{t=1}^T$ are induced by the strategies (as $S_t=\mathcal R(A_t)$), we conveniently write the \emph{cumulative utilities} as functions of the strategies, namely, for Player~G,
\begin{align*}
& U_G(\mathbf{A}, \mathcal R)= \sum_{t\in[T]}U_G(\mathcal R(A_t)) = \sum_{t\in[T]}\sum_{q\in S_t} u_G(q),
\end{align*}
where $\mathbf{A}=(A_1,\dots A_T)$. We similarly use $U_F(\mathbf{A}, \mathcal R)$ to denote Player~F's utility.

\begin{remark}
Note that Player~F can set $\mathcal{R}$ to ensure that the proposed question sets are perceived as fair. For instance, by rejecting the entire set of questions if it seems to disproportionately serve Player~G's interests, much like a responder in a dictator game refuses an inequitable offer~\cite{forsythe1994fairness}.
\end{remark}

\subsection{Full-Information and Utility Recovery Rates}\label{subsec:full colab}
The full-information variant is crucial in assessing our framework. In this idealized scenario, both Players~G and~F share complete information about their utilities and candidate questions $Q_t$. A natural cooperative objective is to maximize the Nash product~\cite{bf5a5d3b-3d24-3822-bb09-28237614c31e}, given by 
\begin{align} \label{eq:Nash product}
S_t^\star 
&= \arg\max_{S \subseteq Q_t,\; |S| = K} 
\; U_G(S)\cdot U_F(S).
\end{align}
This formulation captures a balance between the interests of both sides, and could serve as a theoretical upper bound on achievable joint utility. Extending this approach to all rounds $t$, we obtain the \emph{theoretical full-information collaborative} solution $\bm S^\star = (S_1^\star,\dots, S_T^\star)$. Note that solving Equation~\eqref{eq:Nash product} requires complete transparency: both $u_G$ and $u_F$ must be publicly known, and Player~G must disclose its entire candidate pool $Q_t$. These requirements are unrealistic (recall Section~\ref{justifications}), and thus we label this solution theoretical.

To quantify the attractiveness of our framework, we introduce the following notion of \emph{Utility Recovery Rate (URR in short)}. URR measures how much of the full-collaboration utility is recovered by the real-world strategies, and is formally defined by 
\begin{align*}
   URR_G = \frac{U_G(\mathbf{A}, \mathcal R)}{U_G(\bm S^\star)}, \
   URR_F = \frac{U_F(\mathbf{A}, \mathcal R)}{U_F(\bm S^\star)}.
\end{align*}
While URR could theoretically be greater than 1, it is highly unlikely to occur. Values closer to $1$ indicate that the real-world, minimally information-revealing strategy recovers a larger fraction of the utility achievable under full collaboration.

As it turns out, computing URR is impractical, as optimizing the Nash product in Equation~\eqref{eq:Nash product} is computationally intractable. In particular, we show that
\begin{theorem}\label{thm:hard}
The problem in Equation~\eqref{eq:Nash product} is NP-hard. 
\end{theorem}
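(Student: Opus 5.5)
We would prove Theorem~\ref{thm:hard} by a polynomial-time reduction from \emph{Balanced Partition}. In that problem we are given $2n$ positive integers $a_1,\dots,a_{2n}$ with $A=\sum_i a_i$, and we ask whether some index set $I$ with $|I|=n$ satisfies $\sum_{i\in I}a_i = A/2$. This cardinality-constrained variant of Partition is a standard NP-complete problem. The idea is to make the two utilities complementary, so that $U_G(S)+U_F(S)$ is the same for every feasible $S$. A product of two nonnegative numbers with a fixed sum is maximized exactly when the two numbers are equal, so maximizing the Nash product becomes deciding whether a perfectly balanced $K$-subset exists.

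\textbf{Construction.} Fix a single round with candidate pool $Q_t=\{q_1,\dots,q_{2n}\}$ and set $K=n$. Let $C=\max_i a_i$ and define
\begin{align*}
u_G(q_i)=n a_i + nC, \qquad u_F(q_i)= A - n a_i + nC .
\end{align*}
We multiply by $n$ to keep all utilities integral. Both utilities are nonnegative, since $nC \ge n a_i$ and $A\ge 0$.

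\textbf{Key computation.} For $S$ with $|S|=n$, write $x=\sum_{q_i\in S}a_i$. Then
\begin{align*}
U_G(S)=n x + n^2 C, \qquad U_F(S)= nA - n x + n^2 C .
\end{align*}
Hence $U_G(S)+U_F(S)=nA+2n^2C$ for every feasible $S$. The product $U_G(S)\,U_F(S)=(nx+n^2C)(nA-nx+n^2C)$ is a strictly concave quadratic in $x$, maximized uniquely at $x=A/2$, where it equals $\bigl(nA/2+n^2C\bigr)^2$. The Balanced Partition instance is therefore a yes-instance if and only if the optimal value of Equation~\eqref{eq:Nash product} is at least $\bigl(nA/2+n^2C\bigr)^2$. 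Any algorithm solving Equation~\eqref{eq:Nash product}, even one that only returns the optimal value, thus decides Balanced Partition, and the reduction is clearly polynomial.

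\textbf{Expected obstacle.} The main care point is keeping the utilities nonnegative and the instance well defined while the sum stays constant; the shift by $nC$ handles this. One could instead reduce from plain Partition, but the cardinality constraint $|S|=K$ in Equation~\eqref{eq:Nash product} makes the balanced version the natural source. If one wants hardness without relying on Balanced Partition, a standard fallback is padding: append $m$ zero-valued dummy numbers to a plain Partition instance and take $K=m$, so that any subset of the original numbers can be completed to exactly $K$ items. Strict concavity gives that optimality is attained only at exact balance, which completes the argument.
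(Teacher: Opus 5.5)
Your proof is correct and follows essentially the same route as the paper's. The paper reduces from cardinality-constrained subset sum with $f(i)=a_i$ and $g(i)=2T/k-a_i$, while you reduce from Balanced Partition, which is the special case $k=n$, $T=A/2$ of that problem and is justified by the same zero-padding trick the paper uses; in both, the utilities sum to a constant over $K$-subsets, so the Nash product is a strictly concave quadratic in the subset sum that peaks exactly at the target. Your shift by $nC$ is a real improvement in care, since it keeps both utilities nonnegative, whereas the paper's $g(i)=2T/k-a_i$ turns negative whenever some $a_i>2T/k$.
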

While maximizing a bilinear function of additive utilities is generally NP-hard (see, e.g., \cite{FainNashIsNP}), we provide a formal proof for our specific setting in the appendix to keep the paper self-contained.

\subsection{Estimating the Utility Recovery Rates}\label{subsec:estimating urr}
Driven by the need to practically assess our framework, we suggest a way to estimate the URR and effectively upper-bound it. Specifically, let $\mathcal H$ be a set of heuristic solutions to the problem in Equation~\eqref{eq:Nash product}, and let $\bm S_h$ be a solution for the entire $T$ round full-information interaction for some $h \in \mathcal H$. We could then take $\tilde U_G = \max_{h \in \mathcal H} U_G(\bm S_h)$ and similarly, $\tilde U_F = \max_{h \in \mathcal H} U_F(\bm S_h)$ to define the \emph{Estimated Utility recovery Rates} (EURR), namely
\begin{align}\label{eq:eurr}
   EURR_G = \frac{U_G(\mathbf{A}, \mathcal R)}{\tilde U_G}, \
   EURR_F = \frac{U_F(\mathbf{A}, \mathcal R)}{\tilde U_F}.
\end{align}
Note that each heuristic in $\mathcal{H}$ is sub-optimal w.r.t. the product in Equation~\eqref{eq:Nash product}, but the variety introduces solutions that are skewed toward one of the players. Thus, since $\argmax_{h \in \mathcal H} U_G(\bm S_h)$ and $\argmax_{h \in \mathcal H} U_F(\bm S_h)$ could be different, taking the best heuristic for each player would typically lead to \emph{over-estimation} of $U_G(\bm S^\star)$ and $U_F(\bm S^\star)$, resulting in \emph{under-estimation} of URR by EURR.\footnote{To illustrate, if the optimal solution yields a product of $9$ by granting each player $3$, and two heuristic solutions yield $4$ by granting one player $4$ and the other player $1$ and vice versa, then EURR is always less than the theoretical URR.}

\subsection{Heuristic Solutions for EURR}\label{subsec:heuristic}
In this subsection, we provide several computationally efficient heuristics $\mathcal H$ to measure the EURRs. For each heuristic, we describe how it picks $S_t$ for every arbitrary round $t$.
\paragraph{Myopic Round Robin (MPP)} Player $G$ adds a previously-unselected question with the highest $u_G$ score. Then, Player~F adds a previously-unselected question with the highest $u_F$ score. This alternating process continues until the question set contains $K$ questions.
\paragraph{Max Sum of Products (MaxSP)} Instead of optimizing the product of sums like in Equation~\eqref{eq:Nash product}, we optimize the sum of products, e.g., solving $
    \max_{S \subset Q_t, \abs{S}=K} \sum_{q\in S} u_G(q) \cdot u_F(q)$.
\paragraph{Greedy Nash Product (GreedyNP)} 
Iteratively selecting the question that yields the largest marginal increase in the Nash product $(U_G \cdot U_F)$ with respect to the previously-obtained set. More formally, for every partial solution $S$, we pick  a question $q$ such that 
    \[
    q \in \argmax_{q\in Q_t \setminus S}\left(
U_G(S\cup \{q \})\right)\cdot \left( U_F(S\cup \{q \})\right),
    \]
    and update $S\gets S\cup \{q \}$. Here too, we stop when the question set contains $K$ questions.

\section{Experimental Setup}
In this section, we present our experimental methodology. Crucially, the modeling and implementation choices below are simplified to obtain a concrete instantiation of the framework; while other choices are possible, these assumptions are sufficient to expose the strategic effects of the framework and are not meant to be exhaustive.

\subsection{Players and Utilities}\label{subsec:heuristics}

\paragraph{Player~G}
To capture GenAI's improvement potential from our framework, we have used several open-source LLMs:
\begin{itemize}[wide]
    \item \textbf{Pythia 6.9B.} EleutherAI-pythia-6.9b \cite{biderman2023pythiasuiteanalyzinglarge}, a fully open \textit{white-box} model with a well-documented training corpus. Its transparency enables us to verify what data it was trained on.
    \item \textbf{Llama 3.1 8B.} \cite{meta_Llama_3_1}, a \textit{black-box} model for which the training data are not publicly disclosed, representing a common closed-source setting.
    \item \textbf{Llama 3.1 8B-Instruct.} \cite{hf_Llama_3_8b_instruct}, an instruction-tuned variant of Llama 3.1 8B, optimized for conversational and reasoning tasks, similarly lacking full disclosure of training sources.
\end{itemize}
This varied set of LLMs allows us to capture a broad range of behaviors and transparency levels, representing both research-oriented and production-like GenAI systems. By comparing open and closed models, we can evaluate whether our framework remains robust under differing information assumptions and access constraints.

%We approximate Player~G's utility using \textbf{perplexity}---a standard measure of model uncertainty---computed over the initial 64 tokens \niv{(common methodological choice in NLP literature \cite{chernyavskiy2021transformerstheendhistory,so2022primersearchingefficienttransformers})} of each question (title plus question content). 

We approximate Player~G's utility using \emph{perplexity}, a standard measure of model uncertainty. We compute it over the initial 64 tokens of each question (title plus question content), following a common methodological choice in the NLP literature \cite{chernyavskiy2021transformerstheendhistory,so2022primersearchingefficienttransformers}.
A higher perplexity value indicates greater model uncertainty \cite{huang2024surveyuncertaintyestimationllms,10.1007/978-3-032-08462-0_25,NEURIPS2024_1bdcb065}, which has been shown to be a strong predictor of prompt failure \cite{gonen2024demystifyingpromptslanguagemodels}, an out-of-distribution (OOD) indicator\cite{ren2023outofdistributiondetectionselectivegeneration} and a key signal for informative data selection in different learning settings \cite{margatina2023activelearningprinciplesincontext}. Consequently, questions with higher perplexity provide stronger learning signals, and thus yield greater utility for Player~G. \footnote{While high perplexity might reflect low-quality or noisy inputs, we assume such cases are relatively rare in our setting given the platform’s active moderation} Finally, we assume $M=100$, i.e., $\abs{A_t}$ is at most 100 questions in every round.

\paragraph{{Player~F}}
On the forum side, Player~F is simulated using real data from Stack Exchange, which
serves as a representative example of large-scale, community-driven Q\&A platforms. We use the
(normalized) \emph{view count} of each question as a proxy for engagement, as it directly captures the
level of human attention and interest that a question attracts~\cite{mustafa2023motivates}. To ensure comparability across interaction rounds and forum domains, view counts are normalized before aggregation. In our
simulations, Player~F is constrained by a moderation-like capacity limit, allowing at most  $K=50$ GenAI-based questions to be published per round (i.e., $|S_t| \leq 50$). This setting mirrors realistic posting and curation dynamics on Stack Exchange, where community attention and moderation resources are inherently limited.
\subsection{Model Selection and Data Integrity}\label{subsec:data int}
To minimize potential data contamination, we focus on Stack Exchange questions posted in July 2024 to July 2025.  This choice ensures that none of the evaluated models could have been trained on the same data. In particular, \textit{Llama 3.1 8B-Instruct}, which is the most recent model in our set, was released at the end of 2024, and therefore could not have been exposed to Stack Exchange content from or after July 2024. By combining models with both transparent (\textit{white-box}) and opaque (\textit{black-box}) training data sources, we obtain a balanced perspective on how our framework performs across different levels of model transparency and accessibility.

We conducted a longitudinal experiment using Stack Exchange data.
Each week---corresponding to one interaction round---we constructed a candidate pool of questions drawn from that period. We then applied the two stages according to players' respective strategies, as we detail shortly in Subsections~\ref{subsec:strategies}.
This process was repeated weekly over a one-year horizon, from July 23, 2024, to July 23, 2025.

To ensure diversity in both linguistic structure and topical expertise, we used data from five distinct Stack Exchange communities, sampled in proportion to the popularity of each domain: \texttt{math} (54,458 samples), \texttt{stackoverflow} (314,949 samples), \texttt{ubuntu} (12,622 samples), \texttt{english} (2,938 samples), and \texttt{latex} (11,441 samples). This cross-domain selection allows our analysis to capture a broad spectrum of question styles, technical subjects, and community behaviors. To obtain a realistic representation of the overall Stack Exchange ecosystem, we combined these datasets into a single corpus, aligning samples across the same date span to reflect genuine temporal activity patterns.
We provide illustrative examples of questions in \apxref{Section~\ref{sec:example questions}}.

\subsection{Player Strategies}\label{subsec:strategies}
In the asymmetric-information setting, Player~G receives a candidate question pool $Q_t$ at each round and extracts from it a subset $A_t$ such that $\abs{A_t}=M=100$. Player~F then applies its selection rule $\mathcal{R}$ to choose the subset of $A_t$ such that the published group of questions $\abs{S_t}\leq K = 50$. Next, we propose several approaches for picking $A_t$ and $\mathcal R$. %

\paragraph{{Player~G}} The strategies we consider for Player~$G$ are as follows:
\begin{itemize}[wide]%[leftmargin=*]
\item \textbf{G-Greedy}
Under this strategy, Player~G selects the 100 questions with the highest perplexity scores. 
\item \textbf{G-Utility Maximization}
Here, Player~G selects the 100 questions that maximize its \emph{expected utility}, 
\[
\mathbb{E}[U(q)] = u_G(q) \cdot \widehat{\pi}(q),
\]
where $\widehat{\pi}(q)$ denotes Player~G's estimate of the probability that a given question will be accepted by Player~F. To construct these estimates, Player~G trains a lightweight classifier that is updated every quarter (i.e., after 13 interaction rounds). Due to limited supervision in early stages, we employ a TF--IDF text representation~\cite{10.1108/eb026526} combined with a Naive Bayes classifier to approximate $\widehat{\pi}(q)$.
During the first quarter, when no acceptance history is available, we assume $\widehat{\pi}(q)=1$ for every $q \in \mathcal Q$.
\item \textbf{Random} Picking $M=100$ questions from $Q_t$ uniformly at random.
\end{itemize}

\paragraph{{Player~F}}
As for the forum, we construct $\mathcal{R}$ as follows. We learn a classifier $\mathcal C = \mathcal C(q)$ from the data to predict whether a question is likely to generate high user engagement, as we elaborate shortly. Then, we pick a threshold $\theta \in \mathbb R$ such that 
\[
S_t = \mathcal R(A_t) = \text{Top}_K \left\{q\in A_t : \mathcal C(q) \geq \theta  \right\},
\]
where $\text{Top}_K\{\cdot\}$ picks the elements with the highest $\mathcal C$ score in the set.

We now describe the way we construct the classifier $\mathcal C$. We train a \textit{BERT}-based classifier \cite{devlin2019bertpretrainingdeepbidirectional} using the Stack Exchange dataset, employing a cross-entropy loss function. During the training process, each question is labeled according to its normalized view count percentile: questions in the top 40\% are labeled as positive samples ($y=1$), those in the bottom 40\% as negative samples ($y=0$), and the middle 20\% are excluded from the training set. As it turns out, the classifier's performance naturally decays over time (due to content shift). To that end, we retrain the classifier every quarter on the cumulative data collected up to that point, where the initial training period covers data from April 23, 2024, to July 22, 2024. A full analysis detailing the classifiers' performance on the weekly datasets is included in \apxref{Section~\ref{app:classifier_performance}}.

Based on validation results, we determine a decision threshold $\theta$ that satisfies the condition $\text{Precision} = 2 \times \text{Recall}$. This dynamic threshold varies within the range of $0.81 \leq \theta \leq 0.9$, yielding a Precision of $\simeq 0.855$ and a Recall of $\simeq 0.43$. This calibration reflects Player~F's preference for \emph{high-quality acceptance} over quantity. In practical terms, Player~F prioritizes Precision by favoring consistently engaging questions---even at the expense of Recall---and is thus willing to publish fewer questions per round.
%The classifier is trained for three epochs on an NVIDIA RTX A6000 GPU, using the cross-entropy loss function. The resulting model achieves an ROC-AUC of 0.74 on the validation set. Based on validation results, we set the decision threshold $\theta$ to $\theta=0.8110$, yielding a precision of 0.7881 and a recall of 0.3940. The threshold $\theta$ is determined to satisfy $\text{Precision} = 2 \times \text{Recall}.$ This calibration reflects Player~F's preference for \emph{high-quality acceptance} over quantity. In practical terms, Player~F prioritizes precision, favoring consistently engaging questions even at the expense of recall, and is therefore willing to publish fewer questions per round.}

\subsection{Robust Analysis and EURR}\label{subsec:full colab strategies}
To compute the EURR (recall Equation~\eqref{eq:eurr}), we estimate $\tilde U_F$ and $\tilde U_G$ via a bootstrap analysis over 50 bootstrap samples. For each bootstrap sample and time $t$, we construct $Q_t$ by randomly sampling half of the questions asked on the platform during that period, yielding a candidate pool ranging from 376 to 5,800 questions per round. We then consider the full-information collaborative approach 
in which the players cooperate perfectly and jointly select the subset $S_t$. In this setting, we evaluate the MPP, MaxSP, and GreedyNP heuristics introduced in Subsection~\ref{subsec:heuristic}, along with random question selection (\textbf{Random}) as a control. We compute the EURR for each bootstrap sample and report the final EURR averaged across all 50 samples. 

% To evaluate the robustness of our approach, we conducted a bootstrap analysis across 50 simulated datasets. For each bootstrap sample and time $t$, we constructed $Q_t$ by randomly sampling half the questions asked in the platform in that period, yielding a candidate pool of questions $Q_t$ ranging between 376 and 5,800 questions per round. We calculated the EURR for each bootstrap sample and obtained the final EURR by averaging across all 50 bootstrap samples.

% To compute the EURR (recall Equation~\eqref{eq:eurr}), we need to estimate $\tilde U_F$ and $\tilde U_G$. To that end, we construct the \emph{idealized experiment} where players engage in perfect cooperation, jointly selecting the subset $S_t$. We employ the MPP, MaxSP, and GreedyNP heuristics mentioned in Subsection~\ref{subsec:heuristic}, along with a random question selection (\textbf{Random}), serving as a control approach.

\subsection{Hardware and Runtime}
All experiments were run on a single NVIDIA RTX A6000 GPU and took roughly 10 hours.

%All experiments were conducted using an NVIDIA RTX A6000 GPU. Each classifier was trained for three epochs, requiring a total training time of 172 minutes and 16 seconds. We recorded the runtime of the strategies and heuristics. For the G-Utility strategy, the total execution times were 2 hours, 43 minutes, and 42 seconds for Pythia 6.9B; 2 hours, 26 minutes, and 36 seconds for Llama 3.1 8B; and 2 hours, 18 minutes, and 23 seconds for Llama 3.1 8B-Instruct.  Under the G-Greedy strategy, Pythia 6.9B completed the task in 53 minutes and 1 second, Llama 3.1 8B in 1 hour, 15 minutes, and 32 seconds, and Llama 3.1 8B-Instruct in 57 minutes and 22 seconds. The random baseline yielded runtimes of 54 minutes and 44 seconds, 1 hour, 13 minutes, and 22 seconds, and 56 minutes and 34 seconds across the three respective models. Finally, the heuristics approach took 2 hours, 25 minutes, and 18 seconds for Pythia 6.9B; 2 hours, 25 minutes, and 12 seconds for Llama 3.1 8B; and 2 hours, 25 minutes, and 19 seconds for Llama 3.1 8B-Instruct.

\section{Findings}\label{sec:findings}
In this section, we present our main findings. Due to space constraints, we defer the exhaustive statistical analysis to \apxref{Subsections~\ref{sec:Full_colab} and \ref{subsec:appndx:asymmetric}}.
\begin{figure*}[t]
    \centering
    
    % --- First Row ---
    \begin{subfigure}[t]{0.32\linewidth}
        \centering
        \includegraphics[width=\linewidth]{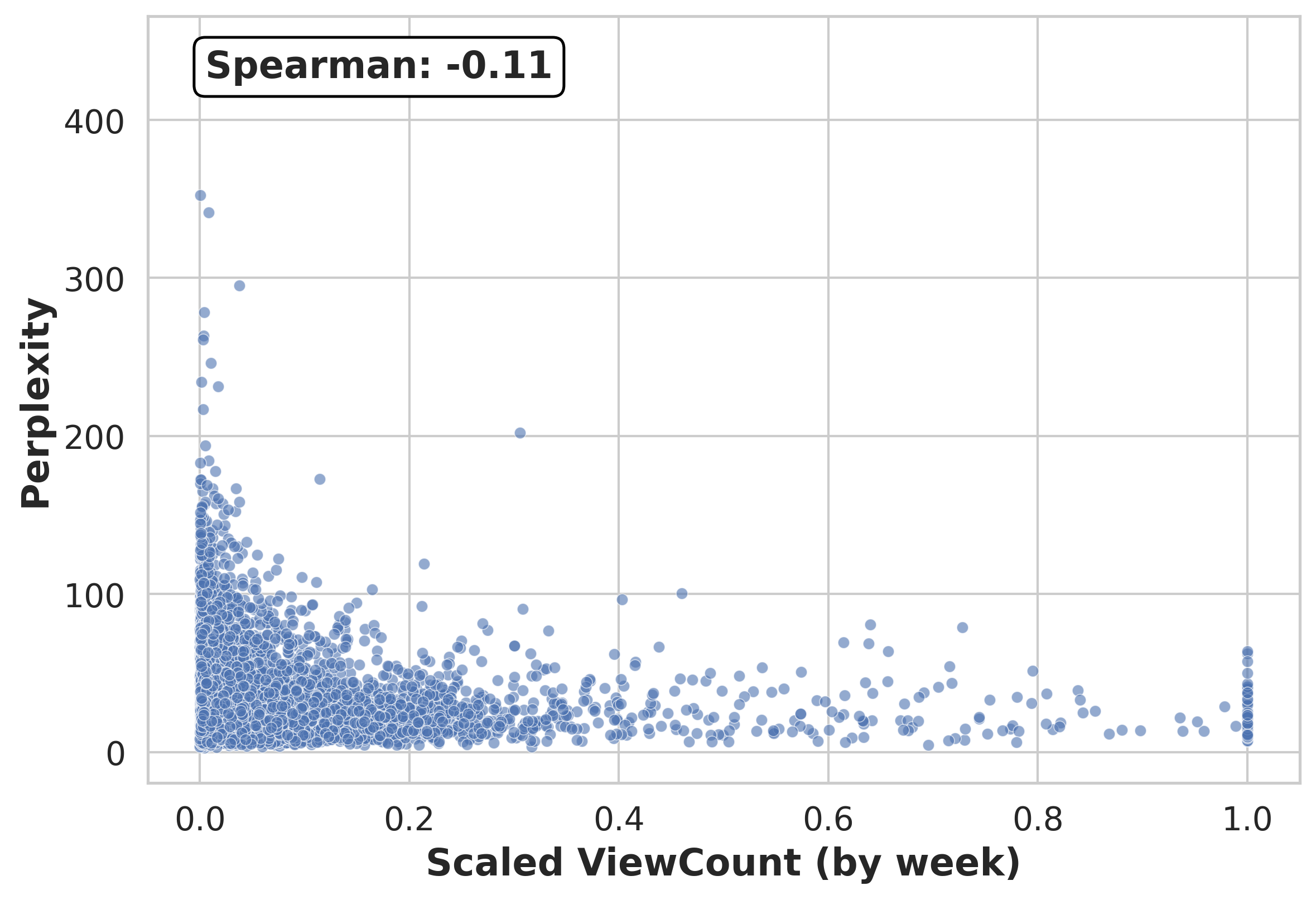}
        \caption{Llama-3.1-8B-Instruct ubuntu}
    \end{subfigure}
    \begin{subfigure}[t]{0.32\linewidth}
        \centering
        \includegraphics[width=\linewidth]{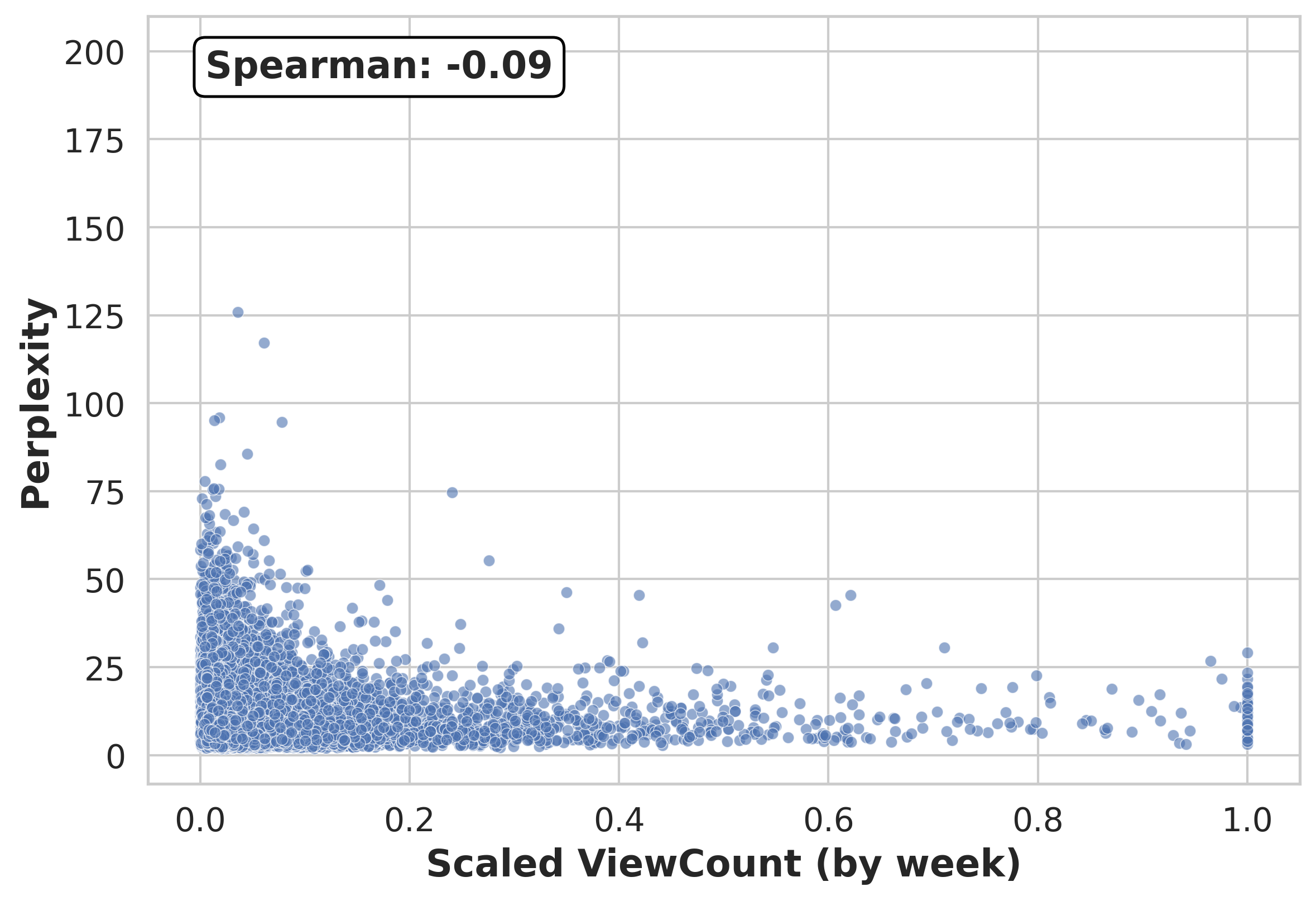}
        \caption{Llama-3.1-8B math}

    \end{subfigure}
    \begin{subfigure}[t]{0.32\linewidth}
        \centering
        \includegraphics[width=\linewidth]{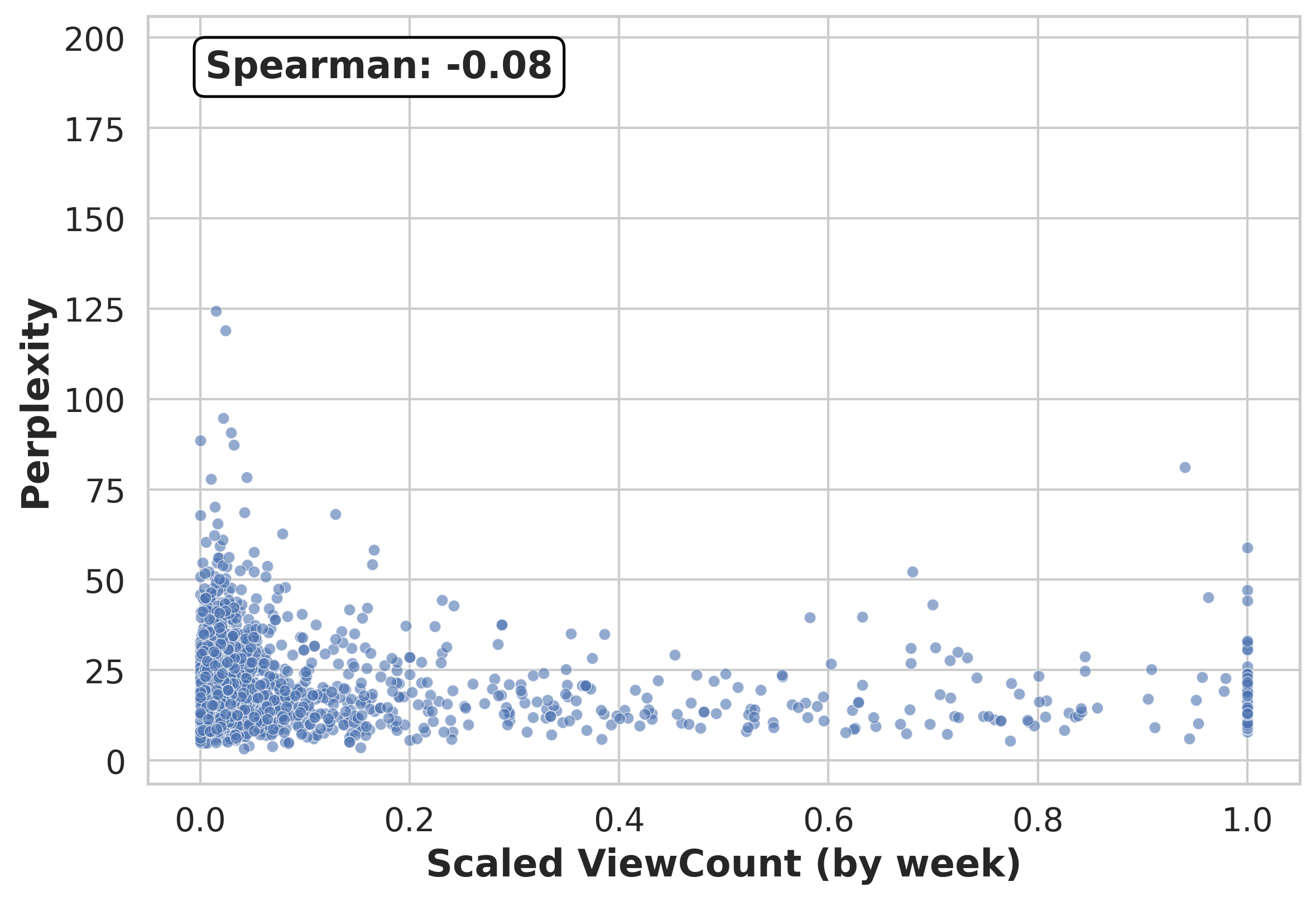}
        \caption{Llama-3.1-8B english}

    \end{subfigure}

    \caption{Relationship between question perplexity and normalized ViewCount across five StackExchange domains. Each plot reports the Spearman correlation coefficient $\rho$. A general pattern of negative correlation emerges, highlighting systematic misalignment between forum engagement and LLM uncertainty.}
    \label{fig:perplexity_vs_viewcount}
\end{figure*}
\begin{table}[t!] 
\small
\centering
\setlength{\tabcolsep}{4pt} % reduce column spacing
\caption{Performance of the idealized full-information and cooperative scenario. Best results are bolded, and * indicates statistical significance based on bootstrapped pairwise comparisons.}
\begin{tabular}{l l c c}
\toprule
\textbf{LLM} & \textbf{Heuristic} & \textbf{Views} & \textbf{Perplexity} \\
\midrule
Pythia 6.9B & MPP & \textbf{2.35}* & 85{,}548.21 \\
Pythia 6.9B & MaxSP & 1.42 & 100{,}934.89 \\
Pythia 6.9B & GreedyNP & 1.41 & \textbf{103{,}705.44}* \\
Pythia 6.9B & Random & 1.05 & 34{,}522.63 \\
\midrule
Llama 3.1 8B  & MPP & \textbf{2.33}* & 79{,}256.36 \\
Llama 3.1 8B  & MaxSP & 1.38 & 90{,}086.83 \\
Llama 3.1 8B  & GreedyNP & 1.40 & \textbf{94{,}678.04}* \\
Llama 3.1 8B  & Random & 1.06 & 32{,}038.47 \\
\midrule
Llama 3.1 8B-Instruct & MPP & \textbf{2.31}* & 131{,}154.55 \\
Llama 3.1 8B-Instruct & MaxSP & 1.23 & 150{,}834.31 \\
Llama 3.1 8B-Instruct & GreedyNP & 1.25 & \textbf{159{,}406.38}* \\
Llama 3.1 8B-Instruct & Random & 1.07 & 48{,}876.31 \\
\bottomrule
\end{tabular}
\label{tab:collab-results}
\end{table}

\subsection{Incentive Misalignment}\label{subsec:findings mis}

We observe a consistently weak negative relationship across all evaluated LLMs and Q\&A domains between question perplexity and view count (mean Spearman correlation $= -0.064$, std $= 0.039$). Figure~\ref{fig:perplexity_vs_viewcount} examines scatter plots for three representative GenAI-forum pairs. As expected, most questions exhibit low view counts, suggesting they are of limited value to the Q\&A platforms. No clear trend emerges as we move along the view-count axis.

The near-zero correlation indicates that the objectives of GenAI models and Q\&A platforms are largely misaligned. If the correlation was close to~1, GenAI systems would naturally select and forward to forums the most engaging and valuable questions for human experts to answer. Instead, this weak association suggests that their interaction forms a genuinely non-trivial game. 
\subsection{Sequential Interaction}\label{subsec: findings sequential}
\paragraph{Full information}
Table ~\ref{tab:collab-results} aggregates the results for the full information setup. For each LLM (Subsection~\ref{subsec:heuristics}) and each heuristic (Subsection~\ref{subsec:heuristic}), we report the cumulative utility obtained over all rounds.
Interestingly, different heuristics favor different players. Specifically, \textbf{GreedyNP}  maximizes Player~G's cumulative perplexity consistently under all LLM choices. In contrast, \textbf{MPP} yields the highest cumulative view counts, favoring Player~F's engagement-oriented objectives across all LLMs.\textbf{Random} performs substantially worse across all metrics.

These findings highlight that even under conditions of complete transparency and alignment, inherent trade-offs between model improvement potential (Player~G) and user engagement (Player~F) remain unavoidable. Moving forward, we use $\tilde U_G$ as the highest utility obtained for Player~G,and similarly  $\tilde U_F$ for Player~F. We remind the reader that this choice overestimates $U_G(\bm S^\star)$ and $U_F(\bm S^\star)$, as Subsection~\ref{subsec:estimating urr} describes.

\paragraph{Asymmetric information}

\begin{table}[htbp]
\centering
\caption{ Performance under asymmetric information. The  mean of the normalized view counts and cumulative
perplexity represent Player~F and Player~G’s utilities, respectively. The EURR is computed against
the worst-case cooperative solution (recall Equation \eqref{eq:eurr}).}
\label{tab:combined_performance}
\begin{tabular}{llcccc}
\hline
\textbf{LLM} & \textbf{Strategy} & \textbf{Views} & \textbf{Perplexity} & \textbf{$EURR_F$} & \textbf{$EURR_G$} \\
\hline
Pythia 6.9B & G-Greedy &0.721 &51729.693 & 0.309 & 0.499 \\
Pythia 6.9B & G-Utility &\textbf{1.621} &\textbf{65346.304} & \textbf{0.696} & \textbf{0.630} \\
Pythia 6.9B & Random &0.867 &16168.148 & 0.371 & 0.156 \\
\hline
Llama 3.1 8B & G-Greedy &0.601 &35985.981 & 0.260 & 0.380 \\
Llama 3.1 8B & G-Utility &\textbf{1.424} & \textbf{48307.862}& \textbf{0.616} & \textbf{0.510} \\
Llama 3.1 8B & Random &0.854 &14044.069 & 0.370 & 0.148 \\
\hline
Llama 3.1 8B-Instruct & G-Greedy &0.568 & 63644.422& 0.248 & 0.399 \\
Llama 3.1 8B-Instruct & G-Utility &\textbf{1.348} &\textbf{83868.269} & \textbf{0.589} & \textbf{0.526} \\
Llama 3.1 8B-Instruct & Random &0.856 &21509.105 & 0.374 & 0.135 \\
\hline
\end{tabular}
\end{table}

Table~\ref{tab:combined_performance} reports player utilities for each of Player~G's strategies. Across all evaluated language models, \textbf{G-Utility} consistently outperforms the \textbf{G-Greedy} strategy. This result is statistically significant under all LLMs. In other words, incorporating acceptance probability estimates yields higher total perplexity than naively maximizing perplexity alone across all LLMs.

For example, with Pythia-6.9b, \textbf{G-Greedy} achieves a cumulative perplexity of 51,729, whereas \textbf{G-Utility} increases it to 65,346, while simultaneously boosting community engagement from 0.721 to 1.621 normalized views. Similar gains are observed for both variants of Llama-3.1, where expected-utility selection yields perplexity improvements of +24\% to +34\%  relative to greedy, alongside substantial increases in views. Interestingly, \textbf{G-Greedy} underperforms relative to the random baseline w.r.t. normalized views, as Player~G is agnostic toward Player~F's utility under both strategies.

Ultimately, Table~\ref{tab:combined_performance} reports the EURR for each player. \textbf{G-Utility} allows for recovering 58\%-70\% of Player~F's utility, and similarly 51\%-63\% for Player~G's utility compared to the idealized full-information, collaborative scenario. We remind the reader that these are \emph{under-estimates}, highlighting that potential collaboration on the basis of our framework could be even more appealing to both parties.

\section{Related Work}

\paragraph{Data Depletion and Feedback Loops.} Users increasingly abandon their roles as knowledge contributors once AI tools provide immediate assistance \cite{10.1093/pnasnexus/pgae400}, creating a ``participation decline'' \cite{scientificreports2024genai}. This trend poses significant risks to the long-term viability of AI development~\cite{10.1145/3706598.3714069}. Both open-source and commercial LLMs critically rely on rich, publicly available datasets \cite{ahmed2024studyingllmperformanceclosed}, such as \textit{The Pile}~\cite{pile} and \textit{CodeInsight} \cite{beau2024codeinsightcurateddatasetpractical}, which incorporate data from Q\&A platforms like Stack Overflow. As these sources experience declining contributions, AI models face a ``data starvation'' risk: reduced human input may degrade model quality, discouraging further participation and creating a self-reinforcing cycle. 

\paragraph{Participation Dynamics and Incentives} Existing approaches to this sustainability challenge have significant limitations. Proposals to restrict community data usage \cite{AKAICHI2025100698} or implement basic incentive mechanisms \cite{FELGENHAUER2024112017} address only surface-level symptoms and neglect the structural dynamics driving participation decline. Some works \cite{zhang2025fairshare} suggest sophisticated data pricing, but as we elaborate in Section~\ref{justifications}, two-sided information non-monetary markets need more careful attention. Behavioral interventions, such as \citeauthor{taitler2025selective}'s ``selective response'' \cite{taitler2025selective}, attempt to steer user behavior but still treat AI providers and knowledge communities as competing stakeholders rather than collaborators.

Finally, our framework is based on cooperative game theory, similarly to many recent works in the machine learning domain \cite{wu2025munba,zeng2024fairness}.

\section{Conclusion}
In this work, we addressed a critical sustainability challenge at the intersection of generative AI development and online knowledge forums. While LLMs rely heavily on high-quality forum data for continual improvement, the rise of these models threatens the very forums that supply such data, creating a dynamic reminiscent of the ``Tragedy of the Commons'' \cite{doi:10.1126/science.162.3859.1243} .

We surveyed the key guidelines and then formalized the problem through a game-theoretic model of asymmetric information. We designed a practical framework where GenAI companies propose questions and forums selectively accept them, assuming non-transferable utilities. Our empirical analysis demonstrates that player utilities are systematically misaligned, suggesting that the interaction between GenAI systems and human-driven forums constitutes a genuine strategic game rather than an optimization problem. Nonetheless, our results show that a lightweight, acceptance-aware strategy for the GenAI company and a simple threshold classifier for the forum can recover a substantial portion of the ideal full-information variant---achieving up to 70\% of forum utility and 63\% of GenAI utility relative to full-information collaboration. 
Overall, our findings demonstrate that sustainable collaboration between GenAI companies and online forums is feasible and beneficial.

We note that our results are derived under a set of modeling assumptions chosen for tractability and clarity. In \apxref{Section~\ref{sec:limitations}}, we elaborate on the resulting limitations and discuss their implications. Future work could extend ours by addressing or relaxing some of these assumptions.

\section*{Acknowledgments}
This research was supported by the Israel Science Foundation (ISF; Grant No. 3079/24). We thank the anonymous reviewers for their valuable feedback.

\bibliographystyle{plainnat}
\bibliography{ref}

@article{10.1093/pnasnexus/pgae400,
    author = {del Rio-Chanona, R Maria and Laurentsyeva, Nadzeya and Wachs, Johannes},
    title = {Large language models reduce public knowledge sharing on online Q\&amp;A platforms},
    journal = {PNAS Nexus},
    volume = {3},
    number = {9},
    pages = {pgae400},
    year = {2024},
    month = {09},

    issn = {2752-6542},
    doi = {10.1093/pnasnexus/pgae400},
    url = {https://doi.org/10.1093/pnasnexus/pgae400},
    eprint = {https://academic.oup.com/pnasnexus/article-pdf/3/9/pgae400/59316621/pgae400.pdf},
}

@inproceedings{NEURIPS2024_e01519b4,
 author = {Maini, Pratyush and Jia, Hengrui and Papernot, Nicolas and Dziedzic, Adam},
 booktitle = {Advances in Neural Information Processing Systems},
 doi = {10.52202/079017-3941},
 editor = {A. Globerson and L. Mackey and D. Belgrave and A. Fan and U. Paquet and J. Tomczak and C. Zhang},
 pages = {124069--124092},
 publisher = {Curran Associates, Inc.},
 title = {LLM Dataset Inference: Did you train on my dataset?},
 url = {https://proceedings.neurips.cc/paper_files/paper/2024/file/e01519b47118e2f51aa643151350c905-Paper-Conference.pdf},
 volume = {37},
 year = {2024}
}

@inproceedings{kandpal-etal-2024-user,
    title = "User Inference Attacks on Large Language Models",
    author = "Kandpal, Nikhil  and
      Pillutla, Krishna  and
      Oprea, Alina  and
      Kairouz, Peter  and
      Choquette-Choo, Christopher A.  and
      Xu, Zheng",
    editor = "Al-Onaizan, Yaser  and
      Bansal, Mohit  and
      Chen, Yun-Nung",
    booktitle = "Proceedings of the 2024 Conference on Empirical Methods in Natural Language Processing",
    month = nov,
    year = "2024",
    address = "Miami, Florida, USA",
    publisher = "Association for Computational Linguistics",
    url = "https://aclanthology.org/2024.emnlp-main.1014/",
    doi = "10.18653/v1/2024.emnlp-main.1014",
    pages = "18238--18265"
}

@inproceedings{10.5555/3709347.3743523,
author = {Amanatidis, Georgios and Birmpas, Georgios and Lazos, Philip and Leonardi, Stefano and Reiffenh\"{a}user, Rebecca},
title = {Algorithmically Fair Maximization of Multiple Submodular Objective Functions},
year = {2025},
isbn = {9798400714269},
publisher = {International Foundation for Autonomous Agents and Multiagent Systems},
address = {Richland, SC},
booktitle = {Proceedings of the 24th International Conference on Autonomous Agents and Multiagent Systems},
pages = {115–123},
numpages = {9},
location = {Detroit, MI, USA},
series = {AAMAS '25}
}

@misc{taitler2024braesssparadoxgenerativeai,
      title={Braess's Paradox of Generative AI}, 
      author={Boaz Taitler and Omer Ben-Porat},
      year={2024},
      eprint={2409.05506},
      archivePrefix={arXiv},
      primaryClass={cs.GT},
      url={https://arxiv.org/abs/2409.05506}, 
}

@article{
doi:10.1126/science.162.3859.1243,
author = {Garrett Hardin },
title = {The Tragedy of the Commons},
journal = {Science},
volume = {162},
number = {3859},
pages = {1243-1248},
year = {1968},
doi = {10.1126/science.162.3859.1243}}

@article{taitler2025selective,
  title={Selective response strategies for genai},
  author={Taitler, Boaz and Ben-Porat, Omer},
  journal={arXiv preprint arXiv:2502.00729},
  year={2025}
}

@article{AKAICHI2025100698,
title = {A comprehensive review of usage control frameworks},
journal = {Computer Science Review},
volume = {56},
pages = {100698},
year = {2025},
issn = {1574-0137},
doi = {https://doi.org/10.1016/j.cosrev.2024.100698},
url = {https://www.sciencedirect.com/science/article/pii/S1574013724000819},
author = {Ines Akaichi and Sabrina Kirrane}
}

@inproceedings{margatina2023activelearningprinciplesincontext,
    title = "Active Learning Principles for In-Context Learning with Large Language Models",
    author = "Margatina, Katerina  and
      Schick, Timo  and
      Aletras, Nikolaos  and
      Dwivedi-Yu, Jane",
    editor = "Bouamor, Houda  and
      Pino, Juan  and
      Bali, Kalika",
    booktitle = "Findings of the Association for Computational Linguistics: EMNLP 2023",
    month = dec,
    year = "2023",
    address = "Singapore",
    publisher = "Association for Computational Linguistics",
    url = "https://aclanthology.org/2023.findings-emnlp.334/",
    doi = "10.18653/v1/2023.findings-emnlp.334",
    pages = "5011--5034",
}

@misc{gonen2024demystifyingpromptslanguagemodels,
      title={Demystifying Prompts in Language Models via Perplexity Estimation}, 
      author={Hila Gonen and Srini Iyer and Terra Blevins and Noah A. Smith and Luke Zettlemoyer},
      year={2024},
      eprint={2212.04037},
      archivePrefix={arXiv},
      primaryClass={cs.CL},
      url={https://arxiv.org/abs/2212.04037}, 
}

@misc{huang2024surveyuncertaintyestimationllms,
      title={A Survey of Uncertainty Estimation in LLMs: Theory Meets Practice}, 
      author={Hsiu-Yuan Huang and Yutong Yang and Zhaoxi Zhang and Sanwoo Lee and Yunfang Wu},
      year={2024},
      eprint={2410.15326},
      archivePrefix={arXiv},
      primaryClass={cs.CL},
      url={https://arxiv.org/abs/2410.15326}, 
}

@inproceedings{NEURIPS2024_1bdcb065,
 author = {Ye, Fanghua and Yang, Mingming and Pang, Jianhui and Wang, Longyue and Wong, Derek F. and Yilmaz, Emine and Shi, Shuming and Tu, Zhaopeng},
 booktitle = {Advances in Neural Information Processing Systems},
 doi = {10.52202/079017-0491},
 editor = {A. Globerson and L. Mackey and D. Belgrave and A. Fan and U. Paquet and J. Tomczak and C. Zhang},
 pages = {15356--15385},
 publisher = {Curran Associates, Inc.},
 title = {Benchmarking LLMs via Uncertainty Quantification},
 url = {https://proceedings.neurips.cc/paper_files/paper/2024/file/1bdcb065d40203a00bd39831153338bb-Paper-Datasets_and_Benchmarks_Track.pdf},
 volume = {37},
 year = {2024}
}

@misc{chernyavskiy2021transformerstheendhistory,
      title={Transformers: "The End of History" for NLP?}, 
      author={Anton Chernyavskiy and Dmitry Ilvovsky and Preslav Nakov},
      year={2021},
      eprint={2105.00813},
      archivePrefix={arXiv},
      primaryClass={cs.CL},
      url={https://arxiv.org/abs/2105.00813}, 
}

@misc{so2022primersearchingefficienttransformers,
      title={Primer: Searching for Efficient Transformers for Language Modeling}, 
      author={David R. So and Wojciech Mańke and Hanxiao Liu and Zihang Dai and Noam Shazeer and Quoc V. Le},
      year={2022},
      eprint={2109.08668},
      archivePrefix={arXiv},
      primaryClass={cs.LG},
      url={https://arxiv.org/abs/2109.08668}, 
}

@InProceedings{10.1007/978-3-032-08462-0_25,
author="Tur{\'o}n, Pablo
and Cuadros, Montse",
editor="Corchado, Emilio
and Quinti{\'a}n, H{\'e}ctor
and Troncoso Lora, Alicia
and P{\'e}rez Garc{\'i}a, Hilde
and Jove P{\'e}rez, Esteban
and Calvo Rolle, Jos{\'e} Luis
and Mart{\'i}nez de Pis{\'o}n, Francisco Javier
and Garc{\'i}a Bringas, Pablo
and Mart{\'i}nez {\'A}lvarez, Francisco
and Herrero, {\'A}lvaro
and Fosci, Paolo
and S{\'e}rgio Filipe, Ramos",
title="Perplexity, Uncertainty, and the Limits of Active Learning",
booktitle="Hybrid Artificial Intelligent Systems",
year="2026",
publisher="Springer Nature Switzerland",
address="Cham",
pages="315--327",
isbn="978-3-032-08462-0"
}

@misc{ren2023outofdistributiondetectionselectivegeneration,
      title={Out-of-Distribution Detection and Selective Generation for Conditional Language Models}, 
      author={Jie Ren and Jiaming Luo and Yao Zhao and Kundan Krishna and Mohammad Saleh and Balaji Lakshminarayanan and Peter J. Liu},
      year={2023},
      eprint={2209.15558},
      archivePrefix={arXiv},
      primaryClass={cs.CL},
      url={https://arxiv.org/abs/2209.15558}, 
}

@inproceedings{10.1145/3706598.3714069,
author = {Alvarado Garcia, Adriana and Candello, Heloisa and Badillo-Urquiola, Karla and Wong-Villacres, Marisol},
title = {Emerging Data Practices: Data Work in the Era of Large Language Models},
year = {2025},
isbn = {9798400713941},
publisher = {Association for Computing Machinery},
address = {New York, NY, USA},
url = {https://doi.org/10.1145/3706598.3714069},
doi = {10.1145/3706598.3714069},
booktitle = {Proceedings of the 2025 CHI Conference on Human Factors in Computing Systems},
articleno = {846},
numpages = {21},
location = {
},
series = {CHI '25}
}

@misc{devlin2019bertpretrainingdeepbidirectional,
      title={BERT: Pre-training of Deep Bidirectional Transformers for Language Understanding}, 
      author={Jacob Devlin and Ming-Wei Chang and Kenton Lee and Kristina Toutanova},
      year={2019},
      eprint={1810.04805},
      archivePrefix={arXiv},
      primaryClass={cs.CL},
      url={https://arxiv.org/abs/1810.04805}, 
}

@misc{ahmed2024studyingllmperformanceclosed,
      title={Studying LLM Performance on Closed- and Open-source Data}, 
      author={Toufique Ahmed and Christian Bird and Premkumar Devanbu and Saikat Chakraborty},
      year={2024},
      eprint={2402.15100},
      archivePrefix={arXiv},
      primaryClass={cs.SE},
      url={https://arxiv.org/abs/2402.15100}, 
}

@article{phillips2007tamed,
  author       = {Phillips, Ruth},
  title        = {Tamed or Trained? The Co-option and Capture of `Favoured' NGOs},
  journal      = {Third Sector Review},
  volume       = {13},
  number       = {2},
  year         = {2007},
  month        = {July},
  pages        = {27+},

  note         = {Accessed 28 Sept.\ 2025},
}

@article{bf5a5d3b-3d24-3822-bb09-28237614c31e,
 ISSN = {00129682, 14680262},
 URL = {http://www.jstor.org/stable/1906951},
 author = {John Nash},
 journal = {Econometrica},
 number = {1},
 pages = {128--140},
 publisher = {[Wiley, Econometric Society]},
 title = {Two-Person Cooperative Games},
 urldate = {2025-10-03},
 volume = {21},
 year = {1953}
}

@online{hf_llama_3_8b_instruct,
  author       = {{Hugging Face}},
  title        = {Meta-Llama-3-8B-Instruct},
  year         = {2024},
  url          = {https://huggingface.co/meta-llama/Meta-Llama-3-8B-Instruct},
  note         = {Accessed: 2025-10-08}
}

@online{meta_llama_3_1,
  author       = {{Meta AI}},
  title        = {Introducing Meta Llama 3.1: Our most capable models to date},
  year         = {2024},
  url          = {https://ai.meta.com/blog/meta-llama-3-1/},
  note         = {Accessed: 2025-10-08}
}

@misc{biderman2023pythiasuiteanalyzinglarge,
      title={Pythia: A Suite for Analyzing Large Language Models Across Training and Scaling}, 
      author={Stella Biderman and Hailey Schoelkopf and Quentin Anthony and Herbie Bradley and Kyle O'Brien and Eric Hallahan and Mohammad Aflah Khan and Shivanshu Purohit and USVSN Sai Prashanth and Edward Raff and Aviya Skowron and Lintang Sutawika and Oskar van der Wal},
      year={2023},
      eprint={2304.01373},
      archivePrefix={arXiv},
      primaryClass={cs.CL},
      url={https://arxiv.org/abs/2304.01373}, 
}

@article{doi:10.1177/0007650312472609,
author = {Heidi Herlin},
title ={Better Safe Than Sorry: Nonprofit Organizational Legitimacy and Cross-Sector Partnerships},

journal = {Business \& Society},
volume = {54},
number = {6},
pages = {822-858},
year = {2015},
doi = {10.1177/0007650312472609},

URL = { 
    
        https://doi.org/10.1177/0007650312472609
    
    

},
eprint = { 
    
        https://doi.org/10.1177/0007650312472609
    
    

}
}

@article{AI2022103101,
title = {Effects of offering incentives for reviews on trust: Role of review quality and incentive source},
journal = {International Journal of Hospitality Management},
volume = {100},
pages = {103101},
year = {2022},
issn = {0278-4319},
doi = {https://doi.org/10.1016/j.ijhm.2021.103101},
url = {https://www.sciencedirect.com/science/article/pii/S0278431921002449},
author = {Jin Ai and Dogan Gursoy and Yue Liu and Xingyang Lv}
}

@misc{golazizian2024costefficientsubjectivetaskannotation,
      title={Cost-Efficient Subjective Task Annotation and Modeling through Few-Shot Annotator Adaptation}, 
      author={Preni Golazizian and Alireza S. Ziabari and Ali Omrani and Morteza Dehghani},
      year={2024},
      eprint={2402.14101},
      archivePrefix={arXiv},
      primaryClass={cs.CL},
      url={https://arxiv.org/abs/2402.14101}, 
}

@INPROCEEDINGS{6889457,
  author={Wang, Dan and Shang, Yi},
  booktitle={2014 International Joint Conference on Neural Networks (IJCNN)}, 
  title={A new active labeling method for deep learning}, 
  year={2014},
  volume={},
  number={},
  pages={112-119},
  doi={10.1109/IJCNN.2014.6889457},
  ISSN={2161-4407},
  month={July},}

@INPROCEEDINGS{10068925,
  author={Gharawi, Abdulrahman Ahmed and Alsubhi, Jumana and Ramaswamy, Lakshmish},
  booktitle={2022 21st IEEE International Conference on Machine Learning and Applications (ICMLA)}, 
  title={Impact of Labeling Noise on Machine Learning: A Cost-aware Empirical Study}, 
  year={2022},
  volume={},
  number={},
  pages={936-939},
  doi={10.1109/ICMLA55696.2022.00156}}

@article{meta-analytic,
author = {Deci, Edward and Koestner, Richard and Ryan, Richard},
year = {1999},
month = {11},
pages = {627-668},
title = {A Meta-Analytic Review of Experiments Examining the Effects of Extrinsic Rewards on Intrinsic Motivation},
volume = {125},
journal = {Psychological Bulletin},
doi = {10.1037/0033-2909.125.6.627}
}

@article{FELGENHAUER2024112017,
title = {Competition for publication-based rewards},
journal = {Economics Letters},
volume = {244},
pages = {112017},
year = {2024},
issn = {0165-1765},
doi = {https://doi.org/10.1016/j.econlet.2024.112017},
url = {https://www.sciencedirect.com/science/article/pii/S0165176524005019},
author = {Mike Felgenhauer}
}

@article{pile,
  title={The {P}ile: An 800GB Dataset of Diverse Text for Language Modeling},
  author={Gao, Leo and Biderman, Stella and Black, Sid and Golding, Laurence and Hoppe, Travis and Foster, Charles and Phang, Jason and He, Horace and Thite, Anish and Nabeshima, Noa and Presser, Shawn and Leahy, Connor},
  journal={arXiv preprint arXiv:2101.00027},
  year={2020}
}

@misc{almazrouei2023falconseriesopenlanguage,
      title={The Falcon Series of Open Language Models}, 
      author={Ebtesam Almazrouei and Hamza Alobeidli and Abdulaziz Alshamsi and Alessandro Cappelli and Ruxandra Cojocaru and Mérouane Debbah and Étienne Goffinet and Daniel Hesslow and Julien Launay and Quentin Malartic and Daniele Mazzotta and Badreddine Noune and Baptiste Pannier and Guilherme Penedo},
      year={2023},
      eprint={2311.16867},
      archivePrefix={arXiv},
      primaryClass={cs.CL},
      url={https://arxiv.org/abs/2311.16867}, 
}

@misc{dai2024deepseekmoeultimateexpertspecialization,
      title={DeepSeekMoE: Towards Ultimate Expert Specialization in Mixture-of-Experts Language Models}, 
      author={Damai Dai and Chengqi Deng and Chenggang Zhao and R. X. Xu and Huazuo Gao and Deli Chen and Jiashi Li and Wangding Zeng and Xingkai Yu and Y. Wu and Zhenda Xie and Y. K. Li and Panpan Huang and Fuli Luo and Chong Ruan and Zhifang Sui and Wenfeng Liang},
      year={2024},
      eprint={2401.06066},
      archivePrefix={arXiv},
      primaryClass={cs.CL},
      url={https://arxiv.org/abs/2401.06066}, 
}

@article{article,
author = {Reischauer, Georg and Mair, Johanna},
year = {2018},
month = {09},
pages = {220–247},
title = {How Organizations Strategically Govern Online Communities: Lessons from the Sharing Economy},
volume = {4},
journal = {Academy of Management Discoveries},
doi = {10.5465/amd.2016.0164}
}

@misc{hou2024largelanguagemodelssoftware,
      title={Large Language Models for Software Engineering: A Systematic Literature Review}, 
      author={Xinyi Hou and Yanjie Zhao and Yue Liu and Zhou Yang and Kailong Wang and Li Li and Xiapu Luo and David Lo and John Grundy and Haoyu Wang},
      year={2024},
      eprint={2308.10620},
      archivePrefix={arXiv},
      primaryClass={cs.SE},
      url={https://arxiv.org/abs/2308.10620}, 
}

@misc{lu2024deepseekvlrealworldvisionlanguageunderstanding,
      title={DeepSeek-VL: Towards Real-World Vision-Language Understanding}, 
      author={Haoyu Lu and Wen Liu and Bo Zhang and Bingxuan Wang and Kai Dong and Bo Liu and Jingxiang Sun and Tongzheng Ren and Zhuoshu Li and Hao Yang and Yaofeng Sun and Chengqi Deng and Hanwei Xu and Zhenda Xie and Chong Ruan},
      year={2024},
      eprint={2403.05525},
      archivePrefix={arXiv},
      primaryClass={cs.AI},
      url={https://arxiv.org/abs/2403.05525}, 
}

@article{scientificreports2024genai,
  author       = {Zhang, Yucheng and Sun, Chenhao and Chen, Jiajun and Wang, Yuxin and Zhang, Yongfeng},
  title        = {The consequences of generative AI for online knowledge communities},
  journal      = {Scientific Reports},
  year         = {2024},
  volume       = {14},
  number       = {16321},
  doi          = {10.1038/s41598-024-69804-5},
  publisher    = {Nature Publishing Group},
}

@article{10.1108/eb026526,
    author = {SPARCK JONES, KAREN},
    title = {A STATISTICAL INTERPRETATION OF TERM SPECIFICITY AND ITS APPLICATION IN RETRIEVAL},
    journal = {Journal of Documentation},
    volume = {28},
    number = {1},
    pages = {11-21},
    year = {1972},
    month = {01},
    issn = {0022-0418},
    doi = {10.1108/eb026526},
    url = {https://doi.org/10.1108/eb026526},
    eprint = {https://www.emerald.com/jd/article-pdf/28/1/11/1336479/eb026526.pdf},
}

@misc{beau2024codeinsightcurateddatasetpractical,
      title={CodeInsight: A Curated Dataset of Practical Coding Solutions from Stack Overflow}, 
      author={Nathanaël Beau and Benoît Crabbé},
      year={2024},
      eprint={2409.16819},
      archivePrefix={arXiv},
      primaryClass={cs.CL},
      url={https://arxiv.org/abs/2409.16819}, 
}

@article{impacts,
author = {Li, Xinyu and Kim, Keongtae},
year = {2024},
month = {09},
pages = {577-591},
title = {Impacts of generative AI on user contributions: evidence from a coding Q\&A platform},
volume = {36},
journal = {Marketing Letters},
doi = {10.1007/s11002-024-09747-1}
}

@article{procaccia2013approximate,
  title={Approximate mechanism design without money},
  author={Procaccia, Ariel D and Tennenholtz, Moshe},
  journal={ACM Transactions on Economics and Computation (TEAC)},
  volume={1},
  number={4},
  pages={1--26},
  year={2013},
  publisher={ACM New York, NY, USA}
}

@article{koiliaris2018subset,
  title={Subset sum made simple},
  author={Koiliaris, Konstantinos and Xu, Chao},
  journal={arXiv preprint arXiv:1807.08248},
  year={2018}
}

@inproceedings{FainNashIsNP,
author = {Fain, Brandon and Munagala, Kamesh and Shah, Nisarg},
title = {Fair Allocation of Indivisible Public Goods},
year = {2018},
isbn = {9781450358293},
publisher = {Association for Computing Machinery},
address = {New York, NY, USA},
url = {https://doi.org/10.1145/3219166.3219174},
doi = {10.1145/3219166.3219174},
booktitle = {Proceedings of the 2018 ACM Conference on Economics and Computation},
pages = {575–592},
numpages = {18},
location = {Ithaca, NY, USA},
series = {EC '18}
}

@article{helic2025stack,
  title={Stack Overflow Is Not Dead Yet: Crowd Answers Still Matter},
  author={Helic, Denis and Santos, Tiago},
  journal={arXiv preprint arXiv:2509.05879},
  year={2025}
}

@article{forsythe1994fairness,
  title={Fairness in simple bargaining experiments},
  author={Forsythe, Robert and Horowitz, Joel L and Savin, Nathan E and Sefton, Martin},
  journal={Games and Economic behavior},
  volume={6},
  number={3},
  pages={347--369},
  year={1994},
  publisher={Elsevier}
}

@inproceedings{wu2025munba,
  title={Munba: Machine unlearning via nash bargaining},
  author={Wu, Jing and Harandi, Mehrtash},
  booktitle={Proceedings of the IEEE/CVF International Conference on Computer Vision},
  pages={4754--4765},
  year={2025}
}

@article{zeng2024fairness,
  title={Fairness-aware meta-learning via nash bargaining},
  author={Zeng, Yi and Yang, Xuelin and Chen, Li and Ferrer, Cristian and Jin, Ming and Jordan, Michael and Jia, Ruoxi},
  journal={Advances in Neural Information Processing Systems},
  volume={37},
  pages={83235--83267},
  year={2024}
}

@inproceedings{zhang2025fairshare,
  title={Fairshare Data Pricing via Data Valuation for Large Language Models},
  author={Zhang, Luyang and Jiao, Cathy and Li, Beibei and Xiong, Chenyan},
  booktitle={Advances in Neural Information Processing Systems},
  year={2025}
}

@inproceedings{carlini2021extracting,
  title={Extracting training data from large language models},
  author={Carlini, Nicholas and Tramer, Florian and Wallace, Eric and Jagielski, Matthew and Herbert-Voss, Ariel and Lee, Katherine and Roberts, Adam and Brown, Tom and Song, Dawn and Erlingsson, Ulfar and others},
  booktitle={30th USENIX security symposium (USENIX Security 21)},
  pages={2633--2650},
  year={2021}
}

@article{mustafa2023motivates,
  title={What motivates online community contributors to contribute consistently? A case study on Stackoverflow netizens},
  author={Mustafa, Sohaib and Zhang, Wen and Naveed, Muhammad Mateen},
  journal={Current Psychology},
  volume={42},
  number={13},
  pages={10468--10481},
  year={2023},
  publisher={Springer}
}

@article{lu2018learning,
  title={Learning under concept drift: A review},
  author={Lu, Jie and Liu, Anjin and Dong, Fan and Gu, Feng and Gama, Joao and Zhang, Guangquan},
  journal={IEEE transactions on knowledge and data engineering},
  volume={31},
  number={12},
  pages={2346--2363},
  year={2018},
  publisher={IEEE}
}

\ifnum\Includeappendix=1
\onecolumn
\appendix

\section{Appendix}

\section{NP-Hardness of Maximizing Equation~\eqref{eq:Nash product}}

\paragraph{Problem Statement.}
Let $\Omega = \{1,\dots,n\}$ be a finite set, and let $f,g:\Omega\to\mathbb{R}_{\geq 0}$ be two non-negative functions.  Given an integer $k$, consider the optimization problem
\begin{equation}\label{eq:nash product problem}
\max_{S\subseteq\Omega,\;|S|\le k} \;\Bigl(\sum_{i\in S}f(i)\Bigr)\;\Bigl(\sum_{i\in S}g(i)\Bigr).
\end{equation}

\begin{theorem}\label{thm:hard main}
The problem in Equation~\eqref{eq:nash product problem} is NP-hard.
\end{theorem}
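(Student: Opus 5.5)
We prove NP-hardness by a polynomial reduction from \textsc{Equal-Cardinality Partition}. An instance of that problem is a multiset of positive integers $a_1,\dots,a_{2m}$ with $\sum_i a_i = 2T$. The question is whether some $S$ with $|S|=m$ satisfies $\sum_{i\in S}a_i=T$. This problem is NP-complete by the standard padding reduction from \textsc{Partition}. The guiding fact is AM--GM: when $x+y$ is fixed, the product $xy$ is maximized exactly when $x=y$. So we choose $f$ and $g$ so that $\sum_{S}f+\sum_{S}g$ is the same for every $S$ of size $k$. Then the product reaches a known threshold exactly when the two sums are balanced, and balance encodes the partition condition.

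\textbf{Construction.} Let $\Omega=\{1,\dots,2m\}$ and $k=m$. Put $D=\max_i a_i$ and shift every element by setting $f(i)=a_i+D$. Let $C=2(T+mD)/m$ and put $g(i)=C-f(i)$. The shift is what guarantees $g\ge 0$: we have $C=2T/m+2D\ge a_i+D$ because $D\ge a_i$. All numbers stay polynomial in the input size. If $C$ is not an integer, we multiply all data by $m$ to keep them integral.

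\textbf{Upper bound on the objective.} Take any $S$ and write $s=|S|\le k$. Then $\sum_S f+\sum_S g=sC$, so by AM--GM the objective is at most $(sC/2)^2$. Hence every feasible $S$ has value at most $\theta:=(mC/2)^2=(T+mD)^2$. If $s<m$, the value is at most $((m-1)C/2)^2<\theta$, so only sets with $|S|=m$ can reach $\theta$. For $|S|=m$, equality in AM--GM holds iff $\sum_S f = mC/2 = T+mD$. Since $\sum_S f=\sum_S a_i+mD$, this is equivalent to $\sum_S a_i=T$.

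\textbf{Conclusion.} The optimum of Equation~\eqref{eq:nash product problem} equals $\theta$ if the partition instance is a yes-instance, and is strictly below $\theta$ otherwise. A polynomial-time algorithm for the maximization problem would therefore decide \textsc{Equal-Cardinality Partition}. The same argument covers the paper's variant with $|S|=K$, since the reduction only ever uses sets of size exactly $m$.

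The step needing most care is choosing the shift $D$ so that three things hold at once: $g$ is nonnegative, the common total $sC$ is independent of which set $S$ is chosen, and sets smaller than $k$ cannot reach the threshold. This is why we work with the equal-cardinality version of \textsc{Partition} rather than plain \textsc{Partition} or \textsc{Subset Sum}.
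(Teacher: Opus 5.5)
Your proof is correct and follows essentially the same route as the paper, which reduces from cardinality-constrained subset sum (your equal-cardinality partition is a special case) by making $f(i)+g(i)$ constant, so every $k$-set has the same total and the product $A_S(2T-A_S)$ is uniquely maximized at $A_S=T$. Your shift by $D=\max_i a_i$ and your explicit handling of sets with $|S|<k$ also fix two points the paper leaves loose: its choice $g(i)=2T/k-a_i$ is negative whenever $a_i>2T/k$, violating $g\ge 0$, and it analyzes only $|S|=k$ even though the problem allows $|S|\le k$.
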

\begin{proof}
We reduce from the \emph{cardinality-constrained subset sum} problem (CCSS):
\begin{itemize}
  \item \emph{Instance}: positive integers $a_1,\dots,a_n$, a target $T$, and size bound $k$.
  \item \emph{Question}: is there a subset $S\subseteq\{1,\dots,n\}$ of size $|S|=k$ such that $\sum_{i\in S}a_i = T$?
\end{itemize}
\begin{proposition}\label{prop:CCSS}
    The CCSS problem is NP-hard.
\end{proposition}
\begin{proof}[Proof of Proposition~\ref{prop:CCSS}]
We prove this proposition by showing that CCSS is at least as hard as the Subset Sum problem, which is known to be NP-hard~\cite{koiliaris2018subset}. Given an instance \((a_1,\dots,a_n;T)\) of Subset Sum, form the multiset \(\{a_1,\dots,a_n,0,\dots,0\}\) with \(n\) zeros and set \(K=n\).  Then any size-\(K\) subset summing to \(T\) picks exactly the original items that sum to \(T\) (and fills the rest with zeros), so Subset Sum reduces in polynomial time to Exact-\(K\) Subset Sum.  
\end{proof}
We continue with the proof of Theorem~\ref{thm:hard main}. On this input, define an instance of our bilinear maximization by choosing
\[
f(i)=a_i,\quad
M=\frac{2T}{k},\quad
g(i)=M - a_i
\quad\text{for each }i=1,\dots,n.
\]

For any subset $S$ with $|S|=k$, let
\[
A_S = \sum_{i\in S}a_i.
\]
Then
\[
\sum_{i\in S}f(i) = A_S,
\qquad
\sum_{i\in S}g(i) = kM - A_S,
\]
and thus
\[
F(S)
= A_S\,(kM - A_S)
= -A_S^2 + kM\,A_S.
\]
The quadratic function
\[
Q(x) = -x^2 + kM\,x
\]
is strictly concave in $x$ and attains its unique maximum at
\[
x^* = \frac{kM}{2} = \frac{k}{2}\cdot\frac{2T}{k} = T.
\]
Hence, among all subsets of size $k$, $F(S)$ is \emph{uniquely maximized} exactly when $A_S=T$.  Therefore, solving the bilinear maximization and checking whether the optimum equals
\[
T\,(kM - T) = T\,(2T - T)=T^2
\]
answers the subset-sum question.  Since the latter is NP-hard, our maximization problem is NP-hard as well. 
\end{proof}

\section{Limitations}\label{sec:limitations}
Our work makes several assumptions about the interaction between online forums and LLM providers. These assumptions are chosen to isolate the core strategic tradeoffs in a minimal, analytically tractable setting. We acknowledge that they may not hold in real-world environments. In particular, we highlight the following assumptions and corresponding limitations of our framework:

\begin{itemize}[leftmargin=*]
    \item \textbf{Two-player game:} Our work is conceptual with the goal of presenting a proof-of-concept collaboration framework between a single GenAI company and a single forum. However, practical scenarios could involve multiple GenAI entities and multiple forums simultaneously. Such multi-agent settings could introduce strategic interdependence among GenAI players; for example, if an LLM company anticipates that a competitor will submit certain questions, it may adjust its own submissions strategically. Alternatively, if one forum refuses to publish questions from a certain LLM provider, the provider could decide to discontinue collaboration with that forum.

    \item \textbf{Single domain of utility proxies:} Our analysis is limited to a single domain of utility proxies. We do not examine cases where the GenAI company submits domain-specific content (e.g., questions related to its own products or models) or where the forum's selection rule $\mathcal{R}$ evolves dynamically over time. Such extensions could meaningfully affect both the utility structure and the observed patterns of cooperation.

    \item \textbf{Temporal scope:} The empirical evaluation is based on data covering a one-year period. This temporal scope may not fully capture the long-term stability or potential cyclical behavior of cooperative strategies.

    \item \textbf{Linear utilities:} Recall that our model assumes utilities are linear. There are two ways to justify this modeling decision. First, linear aggregation enables tractable analysis and transparent comparison between strategies, avoiding the computational complexity of modeling higher-order dependencies. Second, since the number of questions published through Player~F represents only a small fraction of the overall data available to Player~G, potential interactions between questions, such as redundancy or topical overlap, can be reasonably neglected. 
    
    While this simplifies the setting, it might be unrealistic in some scenarios. Indeed, publishing two similar questions in a forum results in cannibalization of view counts, as the total engagement may be higher if only one of them were published. Non-linear utility functions (e.g., the submodular formulation analyzed by \citet{10.5555/3709347.3743523}) may better capture redundancy effects and are left for future work.
\end{itemize}

\section{Incentive Misalignment}\label{sec:appendix:Incentive mis}
Recall the incentive misalignment analysis from Subsection~\ref{subsec:findings mis}. In this section, we examine the Spearman correlation between the metrics representing the players' incentives: weekly normalized view counts versus perplexity. The data reveals a consistent trend across all models and domains: the correlation is non-existent to slightly negative, with statistical significance across all evaluated settings.

\begin{table}[t]
\small
\centering
\caption{Spearman correlation between normalized view count and perplexity for each domain. Asterisks denote statistical significance at the $0.01$ level.}
\label{spearman_correlation_and_p_val}
\begin{tabular}{l l r r}
\hline
\textbf{LLM} & \textbf{Domain} & \textbf{Spearman} & \textbf{p-value} \\
\hline
Pythia 6.9B & ubuntu         & -0.11725 & $8.6078\times10^{-37}{}^*$  \\
Pythia 6.9B & english        & -0.05326 & $3.81\times10^{-2}$         \\
Pythia 6.9B & latex          & -0.04045 & $2\times10^{-4}{}^*$        \\
Pythia 6.9B & stackoverflow  & -0.01343 & $1.9556\times10^{-11}{}^*$  \\
Pythia 6.9B & math           & -0.07459 & $2.2311\times10^{-54}{}^*$  \\
\hline
LLaMA 3.1 8B & ubuntu        & -0.12704 & $5.7306\times10^{-43}{}^*$  \\
LLaMA 3.1 8B & english       & -0.08214 & $1.2\times10^{-3}{}^*$      \\
LLaMA 3.1 8B & latex         & -0.02481 & $2.27\times10^{-2}$         \\
LLaMA 3.1 8B & stackoverflow & -0.04896 & $2.2188\times10^{-132}{}^*$ \\
LLaMA 3.1 8B & math          & -0.08519 & $1.5570\times10^{-70}{}^*$  \\
\hline
LLaMA 3.1 8B-Instruct & ubuntu        & -0.11300 & $2.6493\times10^{-34}{}^*$  \\
LLaMA 3.1 8B-Instruct & english       & -0.05642 & $2.67\times10^{-2}$         \\
LLaMA 3.1 8B-Instruct & latex         & -0.03429 & $1.6\times10^{-3}{}^*$      \\
LLaMA 3.1 8B-Instruct & stackoverflow & -0.04468 & $1.5948\times10^{-110}{}^*$ \\
LLaMA 3.1 8B-Instruct & math          & -0.08446 & $2.3419\times10^{-69}{}^*$  \\
\hline
\end{tabular}
\end{table}
\section{Classifier Performance and Temporal Degradation}
\label{app:classifier_performance}

Recall from Section~\ref{subsec:strategies} that Player~F uses an engagement classifier to predict which questions are likely to attract high user engagement. In this section, we examine how the classifier's performance evolves over time on the weekly evaluation datasets used throughout the experiment. Since the distribution of Stack Exchange questions shifts over time, the classifier is naturally affected by temporal distribution shift (e.g. concept drift~\cite{lu2018learning}). As a result, a model trained once and left unchanged may become increasingly misaligned with the evolving linguistic patterns, topical trends, and user behaviors that drive engagement.

While the classifier achieves strong performance initially, we observe substantial temporal degradation when the model is evaluated over an extended period without retraining. Figure~\ref{fig:roc_decline} shows the ROC-AUC of a static BERT classifier trained only once at the beginning of the observation period. Over the course of 52 weeks, its performance declines steadily and sharply, dropping from an initial ROC-AUC of approximately $0.76$ to below $0.45$. This deterioration indicates that the signals governing forum engagement evolve rapidly enough that a static classifier becomes increasingly ineffective over time.

To address this degradation, we adopt a periodic retraining strategy. As shown in Figure~\ref{fig:roc_decline}, we retrain the classifier every quarter, namely every 13 weeks, and mark these retraining points with stars. Under this policy, the classifier maintains substantially more stable predictive performance throughout the experiment. In particular, the ROC-AUC generally remains in the range of $0.65$ to $0.70$, with clear recoveries at the retraining points (weeks 13, 26, and 39, marked with a red star), where performance rises to around $0.80$. Overall, this comparison demonstrates that periodic retraining is essential for maintaining a robust and reliable engagement predictor in a dynamic forum environment.

\begin{figure}[t]
    \centering
    % Replace with your actual image path
    \includegraphics[width=0.98\textwidth]{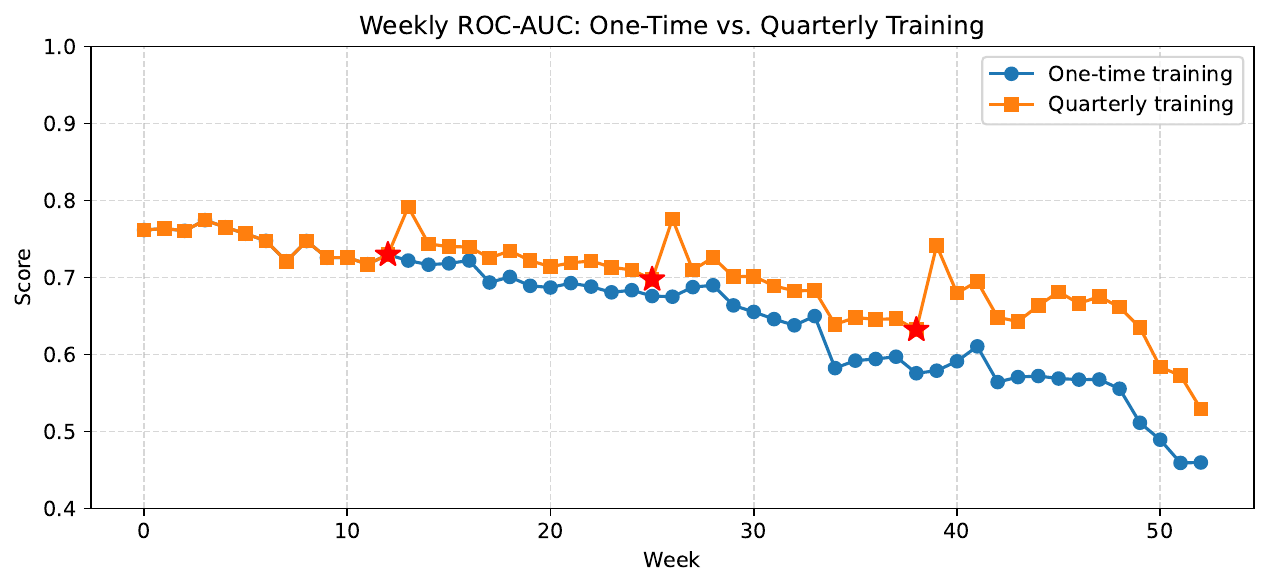} 
    \caption{Weekly ROC-AUC over the 52-week evaluation horizon. The horizontal axis shows the week index, and the vertical axis shows the classifier's ROC-AUC on that week's evaluation set. The blue curve (circle markers) corresponds to a classifier trained only once at the beginning of the period, while the orange curve (square markers) corresponds to a classifier retrained quarterly on cumulative data. Red star markers indicate the retraining weeks. The one-time-trained classifier exhibits a gradual decline in predictive performance over time, reflecting temporal distribution shift. In contrast, quarterly retraining yields consistently stronger performance, with clear upward jumps in ROC-AUC immediately after each retraining point, indicating that refreshing the model with recent data substantially restores predictive accuracy.}
    \label{fig:roc_decline}
\end{figure}

\section{Representative Stack Overflow Examples}
\label{sec:example questions}
\newcounter{qcounter}
\renewcommand{\theqcounter}{Q\arabic{qcounter}}
\newcommand{\topfiveper}{\ensuremath{(H)}}
\newcommand{\bottomfiveper}{\ensuremath{(L)}}

\begin{table}[t!]
\centering
\caption{Representative questions from the four extreme combinations of perplexity and normalized views, where \topfiveper\ and \bottomfiveper\ indicate that the corresponding metric falls in the top 5\% and bottom 5\%, respectively. The groups are high-high (\ref{q:79632861}--\ref{q:79627485}), low-low (\ref{q:79041284}--\ref{q:79029955}), low-high (\ref{q:79631649}--\ref{q:79629891}), and high-low (\ref{q:79041253}--\ref{q:79030629}).}
\label{tab:extremes_questions}
\footnotesize
\setlength{\tabcolsep}{5pt}
\begin{tabular}{l p{8cm} c c c}
\toprule
\textbf{Index} & \textbf{Question Title} & \textbf{Date} & \textbf{Perplexity} & \textbf{Views} \\
\midrule
\refstepcounter{qcounter}\theqcounter \label{q:79632861} & ``API Error: 400 invalid beta flag'' when trying to use Claude Code with Bedrock using claude 3.5 haiku & 2025-05-21 & 70.2~\topfiveper & $2.0 \cdot 10^{-3}$~\topfiveper \\
\refstepcounter{qcounter}\theqcounter \label{q:79631531} & LangGraph resume after interrupt is not working properly when running with more than 1 worker in uvicorn & 2025-05-10 & 62.8~\topfiveper & $1.3 \cdot 10^{-3}$~\topfiveper \\
\refstepcounter{qcounter}\theqcounter \label{q:79630450} & OpenAI Functions Tool calling not actually calling for function & 2025-05-10 & 56.17~\topfiveper & $10^{-4}$~\topfiveper \\
\refstepcounter{qcounter}\theqcounter \label{q:79628471} & MCP Server is giving ``The selected tool does not have a callable `function''' when using via autogen & 2025-05-04 & 61.8~\topfiveper & $6.3 \cdot 10^{-3}$~\topfiveper \\
\refstepcounter{qcounter}\theqcounter \label{q:79627485} & ollama.generate raises model not found error: ``hf.co/mradermacher/Llama-3.2-3B-Instruct-uncensored-GGUF'' & 2025-05-04 & 140.41~\topfiveper & $1.1 \cdot 10^{-3}$~\topfiveper \\
\refstepcounter{qcounter}\theqcounter \label{q:79041284} & Python: only length-1 arrays can be converted to python scalars error when trying to plot & 2024-09-30 & 7.0~\bottomfiveper & $1.4 \cdot 10^{-5}$~\bottomfiveper \\
\refstepcounter{qcounter}\theqcounter \label{q:79037036} & JPQL nested join & 2024-09-29 & 3.6~\bottomfiveper & $2.1 \cdot 10^{-5}$~\bottomfiveper \\
\refstepcounter{qcounter}\theqcounter \label{q:79035799} & How to Sort Alphanumeric Values? & 2024-09-29 & 5.2~\bottomfiveper & $1.8 \cdot 10^{-5}$~\bottomfiveper \\
\refstepcounter{qcounter}\theqcounter \label{q:79031646} & Why is nginx ignores .js files, but all other endpoints work good? & 2024-09-27 & 7.4~\bottomfiveper & $1.9 \cdot 10^{-5}$~\bottomfiveper \\
\refstepcounter{qcounter}\theqcounter \label{q:79029955} & Issue when using routerLinkActive in parent route with dynamically generated child route & 2024-09-27 & 6.8~\bottomfiveper & $1.5 \cdot 10^{-5}$~\bottomfiveper \\
\refstepcounter{qcounter}\theqcounter \label{q:79631649} & How can I find the largest axis-aligned rectangle of a given aspect ratio that can fit inside a rotated rectangle? & 2025-05-21 & 4.9~\bottomfiveper & $8.3 \cdot 10^{-4}$~\topfiveper \\
\refstepcounter{qcounter}\theqcounter \label{q:79631476} & Can you explicitly provide template arguments to std::gcd? & 2025-05-21 & 6.9~\bottomfiveper & $6.7 \cdot 10^{-4}$~\topfiveper \\
\refstepcounter{qcounter}\theqcounter \label{q:79631106} & What's the difference between \#\texttt{[repr(Rust)]}, \#\texttt{[repr(C)]} and \#\texttt{[repr(packed)]}? & 2025-05-20 & 4.3~\bottomfiveper & $1.6 \cdot 10^{-3}$~\topfiveper \\
\refstepcounter{qcounter}\theqcounter \label{q:79630050} & Powershell script closes immediately when run, but runs fine when done manually & 2025-05-20 & 7.2~\bottomfiveper & $7.9 \cdot 10^{-4}$~\topfiveper \\
\refstepcounter{qcounter}\theqcounter \label{q:79629891} & Why Tailwind is not recognized as an internal or external command, operable program or batch file? & 2025-05-20 & 5.9~\bottomfiveper & $1.1 \cdot 10^{-3}$~\topfiveper \\
\refstepcounter{qcounter}\theqcounter \label{q:79041253} & Python code to trim HTML content without losing html structure & 2024-09-30 & 53.5~\topfiveper & $2.0 \cdot 10^{-5}$~\bottomfiveper \\
\refstepcounter{qcounter}\theqcounter \label{q:79031842} & ``@ModelAttribute returns null when passing JSON in Spring Boot POST request'' & 2024-09-27 & 52.1~\topfiveper & $1.9 \cdot 10^{-5}$~\bottomfiveper \\
\refstepcounter{qcounter}\theqcounter \label{q:79031749} & Elasticsearch Match Specified & 2024-09-27 & 63.1~\topfiveper & $1.4 \cdot 10^{-5}$~\bottomfiveper \\
\refstepcounter{qcounter}\theqcounter \label{q:79030647} & GitHub uploaded file output trouble & 2024-09-27 & 74.0~\topfiveper & $1.9 \cdot 10^{-5}$~\bottomfiveper \\
\refstepcounter{qcounter}\theqcounter \label{q:79030629} & 3DS OutScale programmatic user & 2024-09-27 & 61.7~\topfiveper & $1.1 \cdot 10^{-5}$~\bottomfiveper \\
\bottomrule
\end{tabular}
\end{table}

As described in Section~\ref{subsec:data int}, our experimental corpus consists of Stack Exchange questions collected across multiple domains over the period July 2024 to July 2025. In this section, we provide illustrative examples from that corpus, focusing on representative \texttt{stackoverflow} questions that help ground the utility functions. To better understand the dynamics of forum engagement and the requirements for high-quality GenAI training data, we examine ``extreme'' questions evaluated by Pythia 6.9B, sampled from the top and bottom 5\% quantiles of both perplexity and normalized view counts. As shown in Table~\ref{tab:extremes_questions}, there is a stark topical contrast between these extremes. Taken together, these examples highlight the tension between model predictability and forum engagement through distinct questions drawn from different combinations of the two quantities, and provide a qualitative complement to our aggregate analysis by showing the concrete types of questions that populate each region of the distribution.

The high perplexity and high views group (\ref{q:79632861}--\ref{q:79627485}) is entirely dominated by novel, cutting-edge Generative AI frameworks and tooling. Questions detailing integrations with Claude 3.5 (\ref{q:79632861}), LangGraph (\ref{q:79631531}), OpenAI function calling (\ref{q:79630450}), MCP servers (\ref{q:79628471}), and Ollama (\ref{q:79627485}) demonstrate highly modern, specialized terminology. These emerging technologies generate massive community interest (driving high views) but contain vocabulary and paradigms that the LLM struggles to predict (resulting in high perplexity).

Conversely, the low perplexity and low views group (\ref{q:79041284}--\ref{q:79029955}) revolves around standard, deeply established programming and infrastructure concepts. Queries regarding fundamental Python array errors (\ref{q:79041284}), JPQL joins (\ref{q:79037036}), alphanumeric sorting (\ref{q:79035799}), Nginx configuration (\ref{q:79031646}), and front-end routing (\ref{q:79029955}) are highly common in training corpora. Because these topics are foundational, they are highly recognizable and predictable to the LLM (yielding low perplexity), while simultaneously generating minimal engagement or new interest from the broader forum community.

The questions in the low perplexity and high views group (\ref{q:79631649}--\ref{q:79629891}) represent heavily documented challenges that span across fundamental computer science and daily developer friction. We see core language mechanics like Rust's memory representation (\ref{q:79631106}) and C++ template arguments (\ref{q:79631476}), algorithmic geometry such as fitting a rectangle inside a rotated rectangle (\ref{q:79631649}), and incredibly common environment setup errors like PowerShell execution policies (\ref{q:79630050}) and Tailwind command recognition (\ref{q:79629891}). Because these issues are extensively debated and resolved in tutorials and documentation, the LLM predicts their linguistic patterns with high confidence (yielding low perplexity). Concurrently, they draw massive traffic because they act as universal stumbling blocks for a vast number of active programmers.

The last quadrant of high perplexity and low views questions (\ref{q:79041253}--\ref{q:79030629}) captures the ``long tail'' of forum queries, characterized by extreme specificity, unpopular tooling, or irregular phrasing (\ref{q:79041253}). Queries involving niche platforms like "3DS OutScale programmatic user" (\ref{q:79030629}) or hyper-specific framework collisions, such as Spring Boot @ModelAttribute returning null specifically for JSON POST requests (\ref{q:79031842}), lack the broad appeal needed to generate views. Furthermore, titles like "Elasticsearch Match Specified" (\ref{q:79031749}) or "GitHub uploaded file output trouble" (\ref{q:79030647}) are syntactically fragmented and underspecified. This combination of rare terminology and unpredictable sentence structure severely disrupts the model's predictive capabilities, driving up perplexity for questions that the broader community largely ignores.

\section{Statistical Analysis}
In this section, we provide an exhaustive statistical analysis of our findings.
\subsection{Full Collaboration}
\label{sec:Full_colab}
We begin with the full collaboration scenario, considering all heuristics described in Subsection~\ref{subsec:heuristic}. Table~\ref{appendix:tab:full_coop_perp} reports the utility of Player~G, measured by perplexity. The results show that all heuristics consistently outperform the random baseline with a high level of statistical significance. Moreover, the \textbf{GreedyNP} heuristic achieves the best overall performance for Player~G.

Table~\ref{appendix:tab:llm-views-comparison} analyzes Player~F's utility, measured by normalized view counts. Here as well, all baselines significantly outperform the random strategy, while \textbf{MPP} achieves the best results for Player~F.

\begin{table*}[t]
\small
\caption{
Significance analysis of Player~G's utility (perplexity) under full collaboration. The table reports pairwise comparisons between heuristics using a weekly mean--based t-test over 52 weeks, for each evaluated LLM. T-statistics values are shown for each pair of methods, together with the corresponding p-values. Asterisks denote statistical significance at the $0.01$ level. All heuristics significantly outperform the random baseline, while differences show \textbf{GreedyNP} outperforms systematically.
}
\label{appendix:tab:full_coop_perp}
\footnotesize
\centering
\setlength{\tabcolsep}{8pt}
\renewcommand{\arraystretch}{1.1}
\begin{tabular}{l l l r l}
\toprule
\textbf{LLM} & \textbf{Alg A} & \textbf{Alg B} & \textbf{t-statistic} & \textbf{p-value} \\ \hline
Pythia 6.9B & MaxSP & Random & $247.162$ & $1.585 \times 10^{-77}*$ \\
Pythia 6.9B & MaxSP & GreedyNP & $-28.586$ & $3.204 \times 10^{-32}*$ \\
Pythia 6.9B & MaxSP & MPP & $72.841$ & $1.294 \times 10^{-51}*$ \\
Pythia 6.9B & Random & GreedyNP & $-264.489$ & $5.743 \times 10^{-79}*$ \\
Pythia 6.9B & Random & MPP & $-219.769$ & $4.983 \times 10^{-75}*$ \\
Pythia 6.9B & GreedyNP & MPP & $102.512$ & $7.727 \times 10^{-59}*$ \\
\hline
Llama-3.1-8B & MaxSP & Random & $274.533$ & $9.258 \times 10^{-80}*$ \\
Llama-3.1-8B & MaxSP & GreedyNP & $-58.577$ & $4.989 \times 10^{-47}*$ \\
Llama-3.1-8B & MaxSP & MPP & $60.343$ & $1.186 \times 10^{-47}*$ \\
Llama-3.1-8B & Random & GreedyNP & $-292.503$ & $4.150 \times 10^{-81}*$ \\
Llama-3.1-8B & Random & MPP & $-218.164$ & $7.133 \times 10^{-75}*$ \\
Llama-3.1-8B & GreedyNP & MPP & $98.639$ & $5.053 \times 10^{-58}*$ \\
\hline
Llama-3.1-8B-Instruct & MaxSP & Random & $267.713$ & $3.173 \times 10^{-79}*$ \\
Llama-3.1-8B-Instruct & MaxSP & GreedyNP & $-50.226$ & $8.284 \times 10^{-44}*$ \\
Llama-3.1-8B-Instruct & MaxSP & MPP & $58.581$ & $4.970 \times 10^{-47}*$ \\
Llama-3.1-8B-Instruct & Random & GreedyNP & $-309.043$ & $2.806 \times 10^{-82}*$ \\
Llama-3.1-8B-Instruct & Random & MPP & $-275.482$ & $7.818 \times 10^{-80}*$ \\
Llama-3.1-8B-Instruct & GreedyNP & MPP & $99.829$ & $2.817 \times 10^{-58}*$ \\
\hline
\bottomrule
\end{tabular}
\end{table*}

\begin{table}[H]
\footnotesize
\centering
\caption{
Significance analysis of Player~F's utility (normalized view counts) under full collaboration. The table reports pairwise comparisons between heuristics using a weekly mean--based t-test over 52 weeks, for each evaluated LLM. t-statistics are shown for each method, together with the corresponding p-values. Asterisks denote statistical significance at the $0.01$ level. All heuristics significantly outperform the random baseline, while \textbf{MPP} outperforms the other heuristics.
}
\label{appendix:tab:llm-views-comparison}
\begin{tabular}{l l l r l}
\toprule
\textbf{LLM} & \textbf{Alg A} & \textbf{Alg B} & \textbf{t-statistic} & \textbf{p-value} \\ \hline
Pythia 6.9B & MaxSP & Random & $25.342$ & $8.117 \times 10^{-30}*$ \\
Pythia 6.9B & MaxSP & GreedyNP & $2.605$ & $1.212 \times 10^{-2}$ \\
Pythia 6.9B & MaxSP & MPP & $-28.412$ & $4.245 \times 10^{-32}*$ \\
Pythia 6.9B & Random & GreedyNP & $-25.715$ & $4.167 \times 10^{-30}*$ \\
Pythia 6.9B & Random & MPP & $-38.447$ & $2.924 \times 10^{-38}*$ \\
Pythia 6.9B & GreedyNP & MPP & $-29.258$ & $1.091 \times 10^{-32}*$ \\
\hline
Llama-3.1-8B & MaxSP & Random & $17.800$ & $4.944 \times 10^{-23}*$ \\
Llama-3.1-8B & MaxSP & GreedyNP & $-2.773$ & $7.839 \times 10^{-3}*$ \\
Llama-3.1-8B & MaxSP & MPP & $-29.933$ & $3.782 \times 10^{-33}*$ \\
Llama-3.1-8B & Random & GreedyNP & $-17.826$ & $4.641 \times 10^{-23}*$ \\
Llama-3.1-8B & Random & MPP & $-38.155$ & $4.200 \times 10^{-38}*$ \\
Llama-3.1-8B & GreedyNP & MPP & $-29.146$ & $1.304 \times 10^{-32}*$ \\
\hline
Llama-3.1-8B-Instruct & MaxSP & Random & $9.252$ & $2.491 \times 10^{-12}*$ \\
Llama-3.1-8B-Instruct & MaxSP & GreedyNP & $-5.496$ & $1.385 \times 10^{-6}*$ \\
Llama-3.1-8B-Instruct & MaxSP & MPP & $-34.577$ & $4.414 \times 10^{-36}*$ \\
Llama-3.1-8B-Instruct & Random & GreedyNP & $-10.204$ & $1.024 \times 10^{-13}*$ \\
Llama-3.1-8B-Instruct & Random & MPP & $-38.033$ & $4.884 \times 10^{-38}*$ \\
Llama-3.1-8B-Instruct & GreedyNP & MPP & $-33.829$ & $1.235 \times 10^{-35}*$ \\
\hline
\bottomrule
\end{tabular}
\end{table}

\subsection{Asymmetric Collaboration}
\label{subsec:appndx:asymmetric}
Next, we analyze the asymmetric collaboration setting. Table~\ref{appendix:tab:perp_strategy} reports Player~G's utility, measured by perplexity.

As in the full collaboration setting, both \textbf{G-Utility} and \textbf{G-Greedy} outperform the random baseline across all evaluated LLMs. Player~G achieves the highest utility when using the learned \textbf{G-Utility} strategy across all LLMs. This improvement is statistically significant for all LLMs.

Table~\ref{appendix:tab:views_strategy} reports the significance analysis for Player~F's utility, measured by normalized view counts. As in previous settings,\textbf{G-Utility} outperforms the random baseline across all evaluated LLMs, while \textbf{G-Greedy} yields inferior performance compared to the random baseline.This behavior aligns with our expectations, as discussed in Subsection~\ref{subsec: findings sequential}. Notably, \textbf{G-Utility} achieves significantly higher utility than all other strategies for every LLM. 

Overall, our findings reinforce the importance of Player~G's incentive-aware optimization in asymmetric interactions. They further suggest that Player~F may benefit from revealing aspects of its utility function, so as to encourage Player~G to select questions that yield higher utility for Player~F.

\begin{table*}[t]
\footnotesize
\caption{Significance analysis of Player~G's utility (perplexity) under asymmetric collaboration. The table reports pairwise comparisons between strategies using a weekly mean--based t-test over 52 weeks, for each evaluated LLM. The paired t-statistic are shown for each method, together with the corresponding p-values. Asterisks denote statistical significance at the $0.01$ level. Both \textbf{G-Utility} and \textbf{G-Greedy} significantly outperform the random baseline across all LLMs, while \textbf{G-Utility} achieves significantly higher utility than \textbf{G-Greedy}.}
\label{appendix:tab:perp_strategy}
\centering
\setlength{\tabcolsep}{8pt}
\begin{tabular}{l l l r l}
\textbf{LLM} & \textbf{Strategy A} & \textbf{Strategy B} & \textbf{t-statistic} & \textbf{p-value} \\ \hline
Pythia 6.9B & G-Greedy & G-Utility & $-87.766$ & $1.497 \times 10^{-55}*$ \\
Pythia 6.9B & G-Greedy & Random & $178.819$ & $1.203 \times 10^{-70}*$ \\
Pythia 6.9B & G-Utility & Random & $223.304$ & $2.281 \times 10^{-75}*$ \\
\hline
Llama-3.1-8B & G-Greedy & G-Utility & $-82.077$ & $3.908 \times 10^{-54}*$ \\
Llama-3.1-8B & G-Greedy & Random & $143.270$ & $6.137 \times 10^{-66}*$ \\
Llama-3.1-8B & G-Utility & Random & $186.432$ & $1.564 \times 10^{-71}*$ \\
\hline
Llama-3.1-8B-Instruct & G-Greedy & G-Utility & $-84.881$ & $7.620 \times 10^{-55}*$ \\
Llama-3.1-8B-Instruct & G-Greedy & Random & $162.932$ & $1.140 \times 10^{-68}*$ \\
Llama-3.1-8B-Instruct & G-Utility & Random & $185.820$ & $1.837 \times 10^{-71}*$ \\
\hline
\bottomrule
\end{tabular}
\end{table*}

\begin{table*}[t!]
\footnotesize
\caption{Significance analysis of Player~F's utility (normalized view counts) under asymmetric collaboration. The table reports pairwise comparisons between strategies using a weekly mean--based t-test over 52 weeks, for each evaluated LLM.  The paired t-statistic are reported for each method, together with the corresponding p-values. Asterisks denote statistical significance at the $0.01$ level. The learned \textbf{G-Utility} strategy significantly outperforms both \textbf{G-Greedy} and the random baseline across all LLMs, while the random baseline outperforms \textbf{G-Greedy}}

\label{appendix:tab:views_strategy}
\centering
\begin{tabular}{l l l r l}
\toprule
\textbf{LLM} & \textbf{Strategy A} & \textbf{Strategy B} & \textbf{t-statistic} & \textbf{p-value} \\ \hline
Pythia 6.9B & G-Greedy & G-Utility & $-66.461$ & $1.105 \times 10^{-49}*$ \\
Pythia 6.9B & G-Greedy & Random & $-7.488$ & $1.168 \times 10^{-9}*$ \\
Pythia 6.9B & G-Utility & Random & $37.892$ & $5.827 \times 10^{-38}*$ \\
\hline
Llama-3.1-8B & G-Greedy & G-Utility & $-73.354$ & $9.207 \times 10^{-52}*$ \\
Llama-3.1-8B & G-Greedy & Random & $-13.814$ & $1.603 \times 10^{-18}*$ \\
Llama-3.1-8B & G-Utility & Random & $31.189$ & $5.571 \times 10^{-34}*$ \\
\hline
Llama-3.1-8B-Instruct & G-Greedy & G-Utility & $-62.733$ & $1.811 \times 10^{-48}*$ \\
Llama-3.1-8B-Instruct & G-Greedy & Random & $-15.964$ & $4.738 \times 10^{-21}*$ \\
Llama-3.1-8B-Instruct & G-Utility & Random & $24.960$ & $1.620 \times 10^{-29}*$ \\
\hline
\bottomrule
\end{tabular}

\end{table*}

\fi%

\end{document}